\documentclass[10pt,twocolumn,letterpaper]{article}

\usepackage[pagenumbers]{wacv} 

\usepackage[accsupp]{axessibility} 

\usepackage{url}
\usepackage{amssymb}
\usepackage{booktabs}
\usepackage{amsmath,bm,amsthm}
\usepackage{booktabs}
\usepackage{graphicx}
\usepackage{xcolor}
\usepackage{siunitx}
\usepackage[pagebackref]{hyperref}

\usepackage[capitalize]{cleveref}
\Crefname{section}{Section}{Sections}
\Crefname{table}{Table}{Tables}


\begin{document}

\newtheorem{proposition}{Proposition}
\sisetup{uncertainty-mode=separate,text-series-to-math=true,detect-weight,mode=text}
\newrobustcmd\BF{\DeclareFontSeriesDefault[rm]{bf}{b}\bfseries}
\def\remark[#1:#2]{\textbf{[#1: #2]}}

\title{Negative-prompt Inversion: Fast Image Inversion for Editing with Text-guided Diffusion Models}

\author{
  Daiki Miyake$^{1,2}$ \enskip
  Akihiro Iohara$^{2}$ \enskip
  Yu Saito$^{2}$ \enskip
  Toshiyuki Tanaka$^{3}$ \\
  $^{1}$ The University of Tokyo, Japan \enskip
  $^{2}$ DATAGRID Inc., Japan \enskip
  $^{3}$ Kyoto University, Japan \\
  {\tt\small daiki.miyake@weblab.t.u-tokyo.ac.jp} \enskip \\
  {\tt\small \{akihiro.iohara, yu.saito\}@datagrid.co.jp} \enskip \\
  {\tt\small tt@i.kyoto-u.ac.jp}
}

\twocolumn[{%
\renewcommand\twocolumn[1][]{#1}%
\maketitle

\begin{center}[h]
  \centering
  \captionsetup{type=figure}

  \vspace{-2em}
  \includegraphics[width=0.7\linewidth]{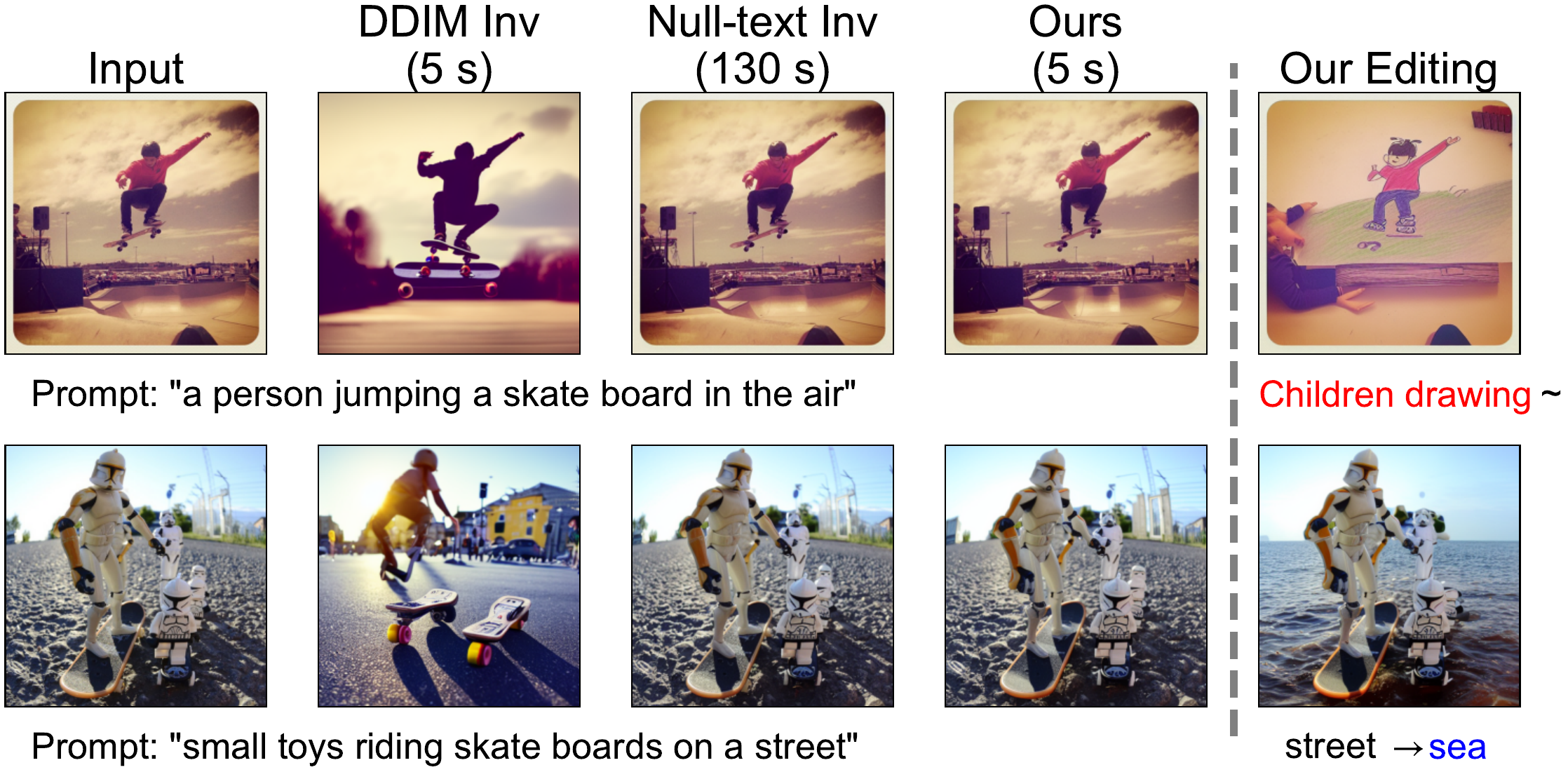}
  \vspace{-1em}

  \captionof{figure}{
    \textbf{Negative-prompt inversion.} Comparison in reconstruction fidelity and time between the proposed method (negative-prompt inversion; Ours), DDIM inversion~\cite{SongICLR2021, DiffusionBeatGANs}, and null-text inversion~\cite{NullTextInv}. The rightmost column shows the results of image editing obtained using prompt-to-prompt~\cite{PromptToPrompt} with our reconstruction.
  }  \label{fig:title}
\end{center}
}]

\begin{abstract}
In image editing employing diffusion models, it is crucial to preserve the reconstruction fidelity to the original image while changing its style. Although existing methods ensure reconstruction fidelity through optimization, a drawback of these is the significant amount of time required for optimization. In this paper, we propose \textbf{negative-prompt inversion}, a method capable of achieving equivalent reconstruction solely through forward propagation without optimization, thereby enabling ultrafast editing processes. We experimentally demonstrate that the reconstruction fidelity of our method is comparable to that of existing methods, allowing for inversion at a resolution of 512 pixels and with 50 sampling steps within approximately 5 seconds, which is more than 30 times faster than null-text inversion.
Reduction of the computation time by the proposed method further allows us to use a larger number of sampling steps in diffusion models 
 to improve the reconstruction fidelity with a moderate increase in computation time.
\end{abstract}

\renewcommand{\thefootnote}{}
\footnotetext{
© 20XX IEEE.  Personal use of this material is permitted.  Permission from IEEE must be obtained for all other uses, in any current or future media, including reprinting/republishing this material for advertising or promotional purposes, creating new collective works, for resale or redistribution to servers or lists, or reuse of any copyrighted component of this work in other works.
}
\renewcommand{\thefootnote}{\arabic{footnote}}

\section{Introduction}
Diffusion models~\cite{ho2020ddpm} are known to yield high-quality results in the fields of image generation~\cite{song2019generative, ho2020ddpm, song2021scorebased, Imagen, DiffusionBeatGANs, StableDiffusion}, video generation~\cite{harvey2022flexible, ho2022video, hoppe2022diffusion, blattmann2023align}, and text-to-speech conversion~\cite{chen2020wavegrad, chen2021wavegrad}.
Text-guided diffusion models~\cite{DiffusionCLIP} are diffusion models 
conditional on given texts (``prompts''), which can generate data with various modalities that fit well with the prompts. It is known that by strengthening the text conditioning through classifier-guidance~\cite{DiffusionBeatGANs} or classifier-free guidance (CFG)~\cite{ClassifierFree}, the fidelity to the text can be improved further.
In image editing using text-guided diffusion models, elements in images, such as objects and styles, can be changed with high quality and diversity guided by text prompts.

In applications based on image editing methods, 
one must be able to generate images that are of high fidelity to original images in the first place, including reproduction of their details, 
and then one will be able to perform appropriate editing 
of images according to the prompts therefrom. 
To achieve high-fidelity image generation, 
most existing research exploits optimization of parameters such as model weights, text embeddings, and latent variables, which results in high computational costs and memory usage.

In this paper, we propose a method that can obtain latent variables and text embeddings yielding high-fidelity reconstruction of real images while using only forward computations.
Our method requires neither optimization nor backpropagation, enabling ultrafast processing and reducing memory usage.
The proposed method is based on null-text inversion~\cite{NullTextInv}, 
which has the denoising diffusion implicit model (DDIM) inversion~\cite{SongICLR2021, DiffusionBeatGANs} 
and CFG as its principal building blocks. 
Null-text inversion improves the reconstruction accuracy by optimizing an embedding which is used in CFG so that the diffusion process calculated by DDIM inversion aligns with the reverse diffusion process calculated using CFG.
We discovered that the optimal embedding obtained by this method can be approximated by the embedding of the conditioning text prompt, and that editing also works by using an embedding of a source prompt instead of the optimized embedding.

Figure~\ref{fig:title} shows a comparison between the proposed method and existing ones.
Our method generated high-fidelity reconstructions when a real image and a corresponding prompt were given. DDIM inversion had noticeably lower reconstruction accuracy. Null-text inversion achieved high-quality results, nearly indistinguishable from the input image, but required much longer computation time. The proposed method, which we call \textbf{negative-prompt inversion}, allows for computation at the same speed as DDIM inversion, while achieving accuracy comparable to null-text inversion. Furthermore, combining our method with image editing methods such as prompt-to-prompt~\cite{PromptToPrompt} allows ultrafast single-image editing (Editing).

We summarize our contributions as follows:
\begin{enumerate}
    \item We propose a method for ultrafast reconstruction of real images with diffusion models, with no need of optimization at all.
    \item We experimentally demonstrate that our method achieves visually equivalent reconstruction quality to existing methods while enabling a more than 30-fold increase in processing speed.
    \item Combining our method with existing image editing methods like prompt-to-prompt allows ultrafast real image editing.
\end{enumerate}

\section{Related work}
\paragraph{Image editing by diffusion models.}
In the field of image editing using diffusion models 
such as Imagen~\cite{Imagen} and Stable Diffusion~\cite{StableDiffusion}, 
Imagic~\cite{Imagic}, UniTune~\cite{UniTune}, and SINE~\cite{Sine} 
are models for editing compositional structures, as well as 
states and styles of objects, in a single image. 
These methods ensure fidelity to original images 
via fine-tuning models and/or text embeddings. 

Prompt-to-prompt~\cite{PromptToPrompt}, another image editing 
method based on diffusion models, reconstructs original images 
via making use of null-text inversion.
Null-text inversion successfully reconstructs real images by optimizing the null-text embedding (the embedding for unconditional prediction) at each prediction step. 
All these methods attempt to reconstruct real images by incorporating an optimization process, which typically takes several minutes to edit a single image.

Plug-and-Play~\cite{PlugandPlay} edits a single image without optimization. 
It obtains latent variables corresponding to the input image using DDIM inversion and reconstructs it according to the edited prompt, inserting attention and feature maps to preserve image structures.
Our inversion method is independent of editing methods, allowing for the freedom to choose an editing method to be combined with, while maintaining a high-quality image structure regardless of the chosen editing method.

\paragraph{Image reconstruction by diffusion models.}
Textual Inversion~\cite{TextInv} and DreamBooth~\cite{DreamBooth} are methods that reconstruct common concepts from a few real images by fine-tuning the model. 
On the other hand, ELITE~\cite{Elite} and Encoder for Tuning (E4T)~\cite{E4T} seek text embeddings that reconstruct real images using an encoder.
The former ones are aimed at concept acquisition, making them difficult to apply to reconstruction of the original image with high fidelity. Although the latter ones require less computation time compared with the former ones, the ease of editing operations is limited, as the corresponding text is not explicitly obtained.

Some previous works~\cite{cho2023noise, garibi2024renoise} can reconstruct images without optimization in the inference stage.
To improve reconstruction quality, noise map guidance~\cite{cho2023noise} guides a path of the reverse diffusion process to align with the forward diffusion process using its gradient.
On the other hand, ReNoise~\cite{garibi2024renoise} improves reconstruction quality by using the backward Euler method (or the implicit Euler method) for inversion.

The proposed method realizes nearly the same reconstruction as null-text inversion, but with only forward computation, enabling image editing in just a few seconds.
By combining our method with image editing methods such as prompt-to-prompt, it becomes possible to achieve flexible and advanced editing using text prompts.

Note that there is an existing implementation~\cite{parmar2023zero} employing a similar idea to the proposed method. We would like to emphasize, however, that our work is the first to justify the proposed method both theoretically and experimentally.

\section{Method}

\begin{figure*}[tb]
  \centering
  \includegraphics[width=0.75\linewidth]{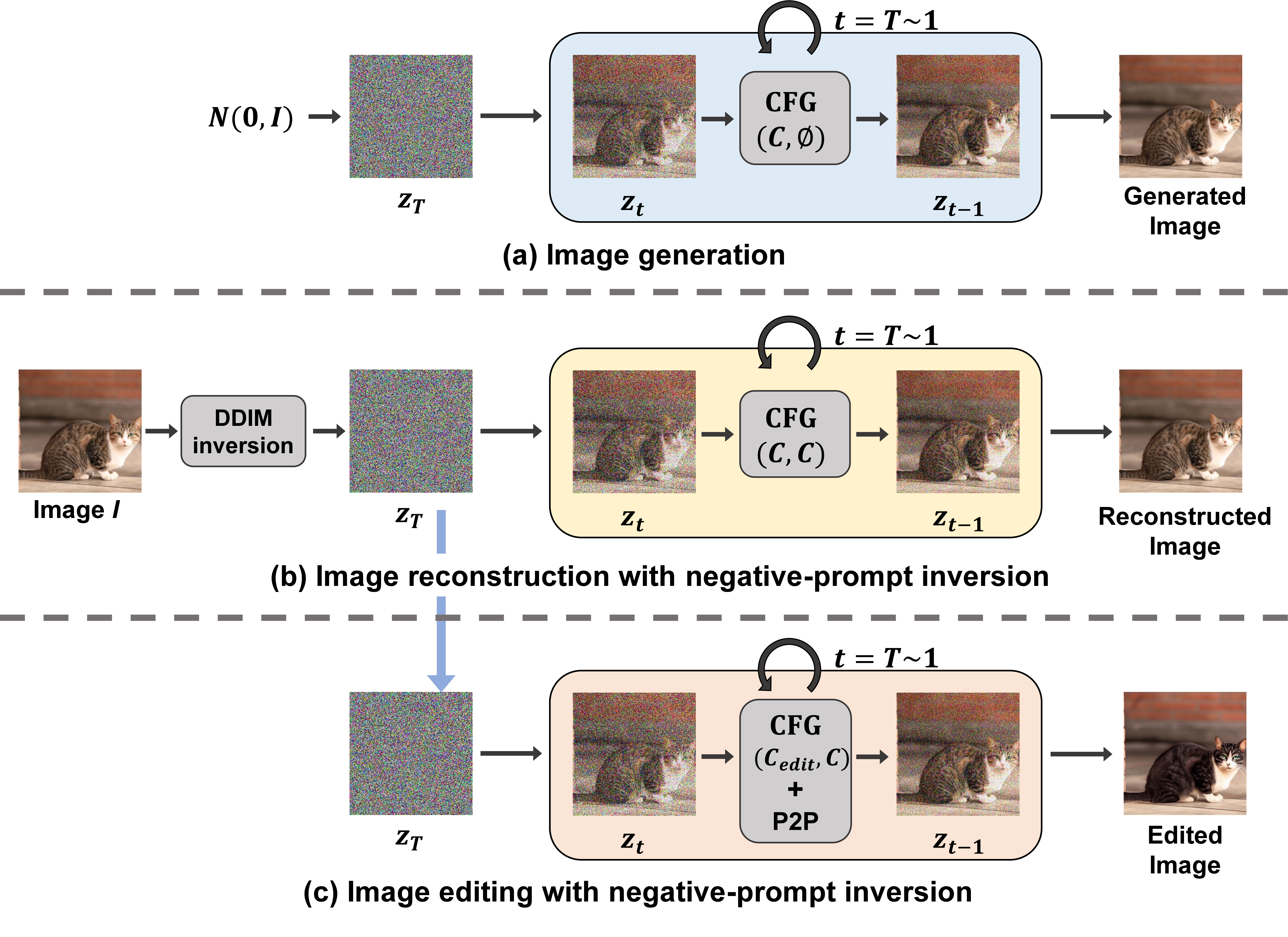}
  \caption{\textbf{Illustration of our framework.} (a) Image generation with CFG. A random noise $\bm{z}_T$ is sampled from a standard normal distribution $\mathcal{N}(\bm{0},\bm{I})$, then denoising $\bm{z}_t$ with CFG over diffusion steps from $T$ to $1$. $\mathrm{CFG}(C,\varnothing)$ denotes that using a prompt embedding $C$ for conditional prediction and the null-text embedding $\varnothing$ for unconditional prediction. (b) Image reconstruction with negative-prompt inversion. We replace the null-text embedding $\varnothing$ with the prompt embedding $C$ in CFG. (c) Image editing with negative-prompt inversion. We use the edited prompt embedding $C_{\mathrm{edit}}$ as the text condition and use the original prompt embedding $C$ instead of the null-text $\varnothing$ in CFG with an image editing method such as prompt-to-prompt (P2P).}  \label{fig:method}
\end{figure*}

\subsection{Overview}

In this section, we describe our method for obtaining latent variables and text embeddings which reconstruct a real image using diffusion models without optimization.
Our goal is that when given a real image $I$ and an appropriate prompt $P$, we calculate latent variables $(\bm{z}_t)$, where $t$ is the index for the diffusion steps, in the reverse diffusion process so as to reconstruct $I$.

\subsection{DDIM inversion}
A diffusion model has a forward diffusion process over diffusion steps from $0$ to $T$ (e.g., $T=1000$ in~\cite{ho2020ddpm}), which degrades the representation $\bm{z}_0$ of an original sample into a pure noise $\bm{z}_T$, and an associated reverse diffusion process, which generates $\bm{z}_0$ from $\bm{z}_T$.
In the training process, a degraded representation $\bm{z}_t$ for $t\in\{1,\cdots,T\}$ is calculated by adding noise $\bm{\epsilon}$ to $\bm{z}_0$, 
and the model is trained to predict the velocity field $\bm{\epsilon}(\bm{z},t)$ at $(\bm{z},t)$ associated with 
the Fokker-Planck equation governing the diffusion process. 
It should be noted that, although the added noise $\bm{\epsilon}$ 
is random, the velocity field $\bm{\epsilon}(\bm{z},t)$, 
to be learned by the model, is deterministic. 
See Appendix~\ref{sec:theory}, 
especially Proposition~\ref{prop:velfield}, for more details 
about the velocity field. 
In text-guided diffusion models, the model is further conditioned by an embedding $C$ of a text prompt $P$, which is obtained via a text encoder like CLIP~\cite{CLIP}.
The loss function is the mean squared error (MSE) between the predicted velocity $\bm{\epsilon}_{\theta}$ and the actual noise $\bm{\epsilon}$,
\begin{equation}
    L(\theta) = \mathbb{E}_{t\sim U(1, T), \bm{\epsilon} \sim\mathcal{N}(\mathbf{0}, \bm{I})} \| \bm{\epsilon} - \bm{\epsilon}_{\theta}(\bm{z}_t, t, C) \|^2_2, \nonumber
\end{equation}
where $U(1,T)$ denotes the uniform distribution on the set $\{1,\cdots,T\}$, and
where $\mathcal{N}(\bm{\mu},\bm{\Sigma})$ denotes the multivariate Gaussian distribution with mean $\bm{\mu}$ and covariance $\bm{\Sigma}$.
Minimizing the loss $L(\theta)$ with respect to the model parameter 
$\theta$ is expected to yield a model $\bm{\epsilon}_\theta(\bm{z},t,C)$ 
which well approximates the conditional velocity field 
$\bm{\epsilon}(\bm{z},t,C)$. 

Stable Diffusion~\cite{StableDiffusion} considers diffusion processes in a latent space: during the training process, a latent representation $\bm{z}_0$ is obtained by passing a sample $x_0$ through an encoder.
In the inference stage, on the other hand, a sample $x_0$ is generated by passing the generated latent representation $\bm{z}_0$ through a decoder.

CFG is used to strengthen text conditioning.
During the computation of the reverse diffusion process, the null-text embedding $\varnothing$, which corresponds to the embedding of a null text ``'', is used as a reference for unconditional prediction 
to enhance the conditioning:
\begin{align}
    \tilde{\bm{\epsilon}}_{\theta}(\bm{z}_t, t, C, \varnothing) &= \bm{\epsilon}_{\theta}(\bm{z}_t, t, \varnothing) 
    \nonumber\\
    &\hphantom{=}+ w \left( \bm{\epsilon}_{\theta}(\bm{z}_t, t, C) - \bm{\epsilon}_{\theta}(\bm{z}_t, t, \varnothing) \right), \label{eq:CFG}
\end{align}
where the guidance scale $w\ge0$ controls strength of the conditioning.

In the inference phase, DDIM~\cite{SongICLR2021} iteratively calculates from the latent variable $\bm{z}_t$ at the diffusion step $t$ 
the latent variable $\bm{z}_{t-1}$ at the diffusion step $(t-1)$ via
\begin{align}
    \bm{z}_{t-1}
    &= \sqrt{\frac{\alpha_{t-1}}{\alpha_t}}\bm{z}_t + \sqrt{\alpha_{t-1}} \left( \sqrt{\frac{1}{\alpha_{t-1}} - 1} - \sqrt{\frac{1}{\alpha_t} - 1} \right) \nonumber \\
    &\hphantom{=} \times \bm{\epsilon}_{\theta}(\bm{z}_t, t, C), \label{eq:ddim}
\end{align}
where $\bm{\alpha}:=(\alpha_1,\ldots,\alpha_T)\in\mathbb{R}_{\ge0}^T$ are hyper-parameters to determine noise scales at $T$ diffusion steps. 
The forward process can also be represented in terms of 
$\bm{\epsilon}_\theta(\bm{z}_t,t,C)$ by inverting the reverse diffusion process (DDIM inversion)~\cite{SongICLR2021, DiffusionBeatGANs}, as 
\begin{align}
    \bm{z}_{t+1}
    &= \sqrt{\frac{\alpha_{t+1}}{\alpha_t}}\bm{z}_t \label{eq:ddimInv} + \sqrt{\alpha_{t+1}} \left( \sqrt{\frac{1}{\alpha_{t+1}} - 1} - \sqrt{\frac{1}{\alpha_t} - 1} \right) \nonumber \\
    &\hphantom{=} \times \bm{\epsilon}_{\theta}(\bm{z}_t, t, C). 
\end{align}

\subsection{Null-text inversion}
DDIM is known to work well: Given an original sample, 
by performing the forward process starting from 
the representation $\bm{z}_0$ of the sample to obtain $\bm{z}_T$ 
and then by inverting the forward process, 
one can reconstruct the original sample with high fidelity 
without CFG (i.e., $w=1$ in~\eqref{eq:CFG}). 
Since CFG is useful to strengthen the text conditioning, it is desirable if one can reconstruct original samples well even when one uses CFG (i.e., $w > 1$).
Simple application of CFG, however, degrades the fidelity 
of reconstructed samples. 
Null-text inversion enables us to faithfully reconstruct given samples even when using CFG, by optimizing the null-text embedding $\varnothing$ at each diffusion step $t$.

In null-text inversion, 
we first calculate the sequence of latent variables $(\bm{z}^{*}_t)_{t\in\{1,\cdots,T\}}$ from $\bm{z}_0$ via DDIM inversion. 
Next, we do initialization with $\bar{\bm{z}}_T = \bm{z}^{*}_T$ and $\varnothing_{T}=\varnothing$.
We then iteratively optimize $\varnothing_t$ for $t=T$ to $1$ as follows: 
At each diffusion step $t$, assuming that we have $\bar{\bm{z}}_t$, 
one calculates $\bm{z}_{t-1}(\bar{\bm{z}}_t,t,C,\varnothing_t)$ via DDIM~\eqref{eq:ddim} and CFG~\eqref{eq:CFG} with the null-text embedding $\varnothing_t$ as 
\begin{align}
    &\bm{z}_{t-1}(\bar{\bm{z}}_t,t,C,\varnothing_t) \nonumber \\
    &= \sqrt{\frac{\alpha_{t-1}}{\alpha_t}}\bar{\bm{z}}_t + \sqrt{\alpha_{t-1}} \left( \sqrt{\frac{1}{\alpha_{t-1}} - 1} - \sqrt{\frac{1}{\alpha_t} - 1} \right) \nonumber \\
    &\hphantom{=} \times \tilde{\bm{\epsilon}}_{\theta}(\bar{\bm{z}}_t, t, C, \varnothing_t). \label{eq:nti}
\end{align}
Then, we optimize $\varnothing_t$ to minimize the MSE between the predicted $\bm{z}_{t-1}(\bar{\bm{z}}_t, t, C, \varnothing_t)$ and $\bm{z}^{*}_{t-1}$:
\begin{equation}
    \min_{\varnothing_t} \|\bm{z}_{t-1}(\bar{\bm{z}}_t,t,C,\varnothing_t) - \bm{z}^{*}_{t-1}\|^2_2, \nonumber
\end{equation}
with the initialization $\varnothing_t=\varnothing_{t+1}$. 
After several updates (e.g., 10 iterations), we fix $\varnothing_t$ and set $\bar{\bm{z}}_{t-1} = \bm{z}_{t-1}(\bar{\bm{z}}_t,t,C,\varnothing_t)$. 
By performing the optimization at $t = T,\ldots,1$ sequentially, we can reconstruct the original image with high fidelity even when using CFG with $w > 1$.
A downside of null-text inversion, on the other hand, is that 
the optimization of the null-text embedding $\varnothing_t$ 
is time-consuming, as it should be performed at every diffusion step. 

\subsection{Negative-prompt inversion} \label{sec_npi}
The proposed method, \textbf{negative-prompt inversion}, utilizes the text prompt embeddings $C$ instead of the optimized null-text embeddings $(\varnothing_t)_{t\in\{1,\ldots,T\}}$ in null-text inversion.
As a result, we can perform reconstruction with only forward computation without optimization, significantly reducing computation time.

We now discuss how one can avoid optimization in our proposal, 
by more closely investigating the process of null-text inversion. 
Let us assume, for the following argument by induction, that at diffusion step $t$ in null-text inversion one has $\bar{\bm{z}}_t$ 
that is close enough to $\bm{z}_t^*$, so that one can regard $\bar{\bm{z}}_t = \bm{z}^{*}_t$ to hold.
In null-text inversion, one obtains $\bm{z}_{t-1}$ from $\bar{\bm{z}}_t$ 
by moving one diffusion step backward using~\eqref{eq:nti}. 
Recall that $\bm{z}^{*}_t$ was calculated from $\bm{z}^{*}_{t-1}$ by moving one diffusion step forward in the diffusion process using \eqref{eq:ddimInv}:
\begin{align}
    \bm{z}^{*}_t 
    &= \sqrt{\frac{\alpha_t}{\alpha_{t-1}}}\bm{z}^{*}_{t-1} + \sqrt{\alpha_t} \left( \sqrt{\frac{1}{\alpha_t} - 1} - \sqrt{\frac{1}{\alpha_{t-1}} - 1} \right) \nonumber \\
    &\hphantom{=} \times \bm{\epsilon}_{\theta}(\bm{z}^{*}_{t-1}, t-1, C). \nonumber
\end{align}
As we have assumed $\bar{\bm{z}}_t=\bm{z}_t^*$, one can substitute the above 
into~\eqref{eq:nti}, yielding 
\begin{align}
\bar{\bm{z}}_{t-1} =& \bm{z}_{t-1}^*
+ \sqrt{\alpha_{t-1}}\left(\sqrt{\frac{1}{\alpha_{t-1}}-1}-\sqrt{\frac{1}{\alpha_t}-1}\right) \nonumber \\
&\hphantom{=} \times \left(\tilde{\bm{\epsilon}}_\theta(\bar{\bm{z}}_t,t,C,\varnothing_t)
-\bm{\epsilon}_\theta(\bm{z}_{t-1}^*,t-1,C)
\right). \nonumber
\end{align}
It implies that the discrepancy between $\bar{\bm{z}}_{t-1}$ and $\bm{z}^{*}_{t-1}$ in null-text inversion will be minimized when the predicted velocity fields are equal:
\begin{align}
    \bm{\epsilon}_{\theta}(\bm{z}^{*}_{t-1}, t-1, C) &=
    \tilde{\bm{\epsilon}}_{\theta}(\bar{\bm{z}}_t, t, C, \varnothing_t) \nonumber \\
    &= w \bm{\epsilon}_{\theta}(\bar{\bm{z}}_t, t, C) + (1-w) \bm{\epsilon}_{\theta}(\bar{\bm{z}}_t, t, \varnothing_t) \nonumber
\end{align}

If furthermore we are allowed to assume that the predicted velocity fields at adjacent diffusion steps are equal, i.e., $\bm{\epsilon}_{\theta}(\bm{z}^{*}_{t-1}, t-1, C) = \bm{\epsilon}_{\theta}(\bm{z}_t^*, t, C) = \bm{\epsilon}_{\theta}(\bar{\bm{z}}_t, t, C)$, then we can deduce that at the optimum the conditional and unconditional predictions are equal:
\begin{equation}
    \bm{\epsilon}_{\theta}(\bar{\bm{z}}_t, t, C) = \bm{\epsilon}_{\theta}(\bar{\bm{z}}_t, t, \varnothing_t) \label{eq:approxnoise}
\end{equation}
Of course one cannot expect the exact equality $\bm{\epsilon}_{\theta}(\bm{z}^{*}_{t-1}, t-1, C) = \bm{\epsilon}_{\theta}(\bm{z}_t^*, t, C)$ 
to hold, since the velocity field $\bm{\epsilon}(\bm{z},t,C)$ depends 
on $\bm{z}$ and $t$. 
One can nevertheless expect that the equality holds approximately 
because of the continuity of the velocity field 
$\bm{\epsilon}(\bm{z},t,C)$ in $(\bm{z},t)$. 
The optimized $\varnothing_t$ can therefore be approximated by the prompt embedding $C$, so that we can discard the optimization of the null-text embedding $\varnothing_t$ in null-text inversion altogether, simply by replacing the null-text embedding $\varnothing_t$ with $C$. 
See Appendix~\ref{sec:justification} for more details on a theoretical justification and empirical validation in practical settings.

The argument so far has the following two consequences: 
\begin{enumerate}
\item For reconstruction, letting $\varnothing_t=C$ amounts to 
not using CFG at all (since $\tilde{\bm{\epsilon}}_\theta(\bm{z}_t,t,C,C)=\bm{\epsilon}_\theta(\bm{z}_t,t,C)$ holds for any $w$). 
The above argument can thus be regarded as providing a justification 
to the empirically well-known observation that DDIM works well without CFG.
\item For editing, optimizing $\varnothing_t$ in null-text inversion 
can be replaced by the simple substitution $\varnothing_t=C_{\mathrm{src}}$ and $C=C_{\mathrm{edit}}$ during the sampling process, where $C_{\mathrm{src}}$ and $C_{\mathrm{edit}}$ denote an embedding of a source prompt and an edited prompt, respectively.
\end{enumerate}
Figure~\ref{fig:method} illustrates our framework.
(a) represents the image generation using CFG, while (b) represents our proposal, negative-prompt inversion, which replaces the null-text embedding with the input prompt embedding $C$.
Additionally, in the case of image editing like prompt-to-prompt (P2P), we can set the embedding $C_{\mathrm{edit}}$ of an edited prompt as the text condition and set the original prompt embedding $C$ as the negative-prompt embedding instead of the null-text embedding, as shown in Fig.~\ref{fig:method} (c).

\section{Experiments}
\subsection{Setting}
In this section, we evaluate the proposed method qualitatively and quantitatively.
We experimented it using Stable Diffusion v1.5 in 
Diffusers~\cite{von-platen-etal-2022-diffusers} implemented with PyTorch~\cite{NEURIPS2019_9015}.
Our code used in the experiments is provided in Supplementary Material.
Following \cite{NullTextInv}, we used 100 images and captions, randomly selected from validation data in COCO dataset~\cite{COCO}, in our experiments.
The images were trimmed to make them square and resized to $512\times 512$.
Unless otherwise specified, in both DDIM inversion and sampling we set the number of the sampling steps to be 50 via using the stride of 20 over the $T=1000$ diffusion steps.

We compared our method with DDIM inversion followed by DDIM sampling with CFG and null-text inversion, and evaluated their reconstruction quality with peak signal-to-noise ratio (PSNR) and learned perceptual image patch similarity (LPIPS)~\cite{LPIPS}, whereas we evaluated their editing quality with CLIP score~\cite{CLIP}.
See Appendix~\ref{sec:implement} for our setting of null-text inversion.
The inference speed was measured on one NVIDIA RTX A6000 connected to one AMD EPYC 7343 (16 cores, 3.2 GHz clockspeed).

\subsection{Reconstruction}

\begin{table*}[tb]
  \caption{\textbf{Evaluation of reconstruction/editing quality and speed in each method.} $\pm$ represents 95\% confidence intervals. 
  Note that as DDIM inversion and ours perform the same process, they are theoretically at the same speed.}
  \label{tab:mainresult}
  \setlength{\tabcolsep}{2pt}
  \centering
  \begin{tabular}{l@{\hspace{-7mm}}S@{\hspace{-12mm}}S@{\hspace{-7mm}}S@{\hspace{-6mm}}S@{}}
    \toprule
    Method                & PSNR$\uparrow$ & LPIPS$\downarrow$ & {Speed (s)} & CLIP$\uparrow$ \\
    \midrule
    Imagic & 17.17(0.66) & 0.356(0.025) & 552.86(0.16)  & 22.99(0.77) \\
    DDIM inversion & 14.05(0.34) & 0.528(0.022) & \BF 4.61(0.03)  & \BF 25.10(0.74) \\
    Null-text inversion   & \BF 26.11(0.81) & \BF 0.075(0.007) & 129.77(2.97) & 24.07(0.72) \\
    \textbf{Ours}  & 23.38(0.66) & 0.160(0.016) & \BF 4.63(0.02) & 23.77(0.74) \\
    \bottomrule
  \end{tabular}
\end{table*}

\begin{figure*}[tb]
  \centering
  \includegraphics[width=0.65\linewidth]{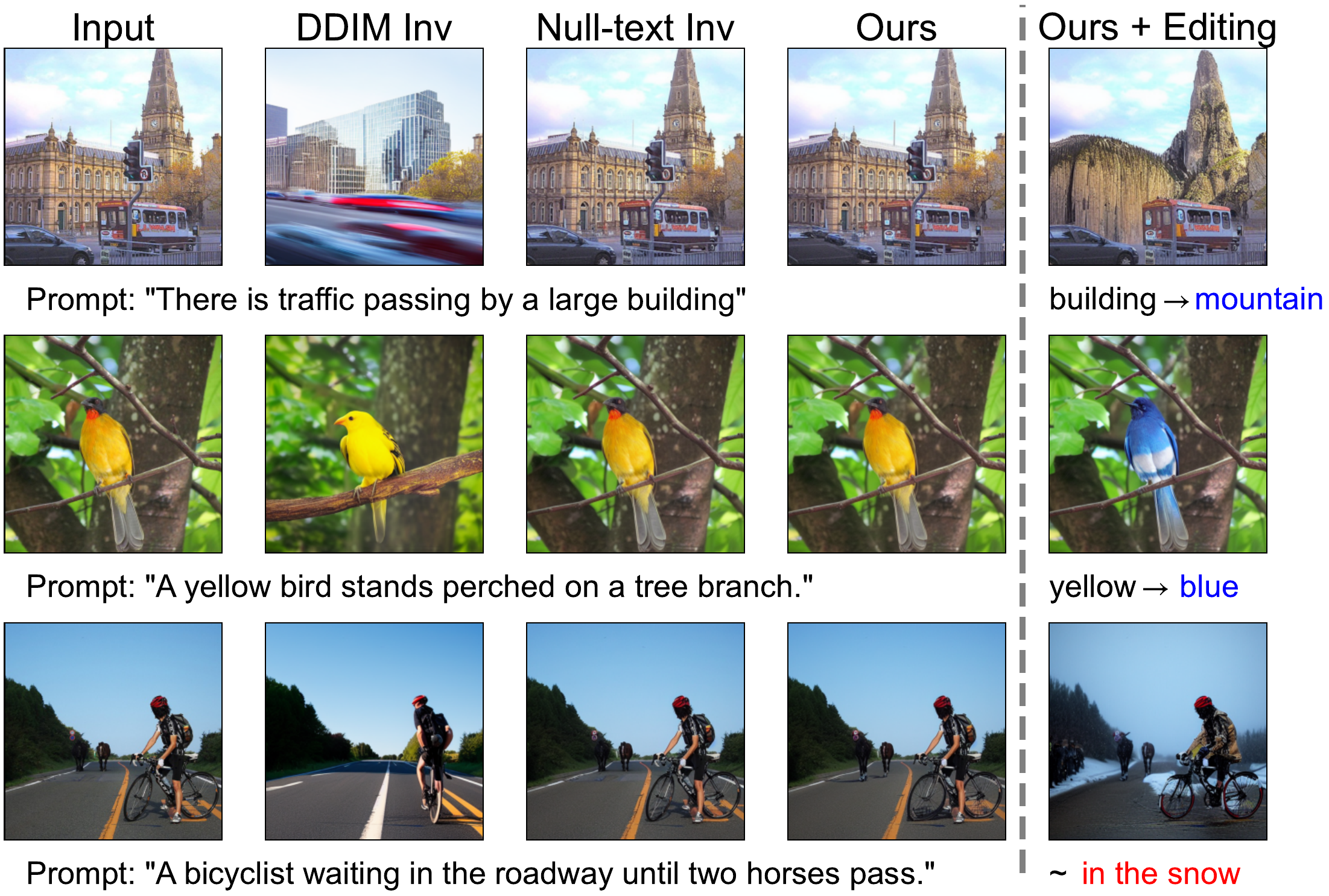}
  \caption{\textbf{Evaluation of reconstructed images.} The left 4 columns show the reconstruction results of each method, and the right column shows the image editing results using our method and prompt-to-prompt. The editing prompts are described below the edited images, that were created by replacing words or adding new words to the original prompt. Our method reconstructed input images as well as null-text inversion and edited images also preserved the structure of the input images.}  \label{fig:reconst-grid}
\end{figure*}

The left three columns of Table~\ref{tab:mainresult} shows PSNR, LPIPS, and inference time of 
reconstruction by the three methods compared.
In terms of PSNR (higher is better) and LPIPS (lower is better), the reconstruction quality of the proposed method was slightly worse than that of null-text inversion but far better than that of DDIM inversion.
On the other hand, the inference speed was 30 times as fast as that of null-text inversion.
This remarkable acceleration is achieved since the iterative optimization and backpropagation processing required for null-text inversion are not necessary for our method.

In Fig.~\ref{fig:reconst-grid}, the left four columns display examples of reconstruction by the three methods.
DDIM inversion reconstructed images with noticeable differences from the input images, such as object position and shape.
In contrast, null-text inversion and negative-prompt inversion (Ours) were capable of reconstructing images, with results that were nearly identical to the input images, and the proposed method achieved a high reconstruction quality comparable to that of null-text inversion.
See Appendix~\ref{sec:recon} for additional reconstruction examples. 
These results suggest that the proposed method can achieve reconstruction quality nearly equivalent to null-text inversion, with a speedup of over 30 times.
Additionally, we also measured the memory usage of the three methods, and found that our method and DDIM inversion used approximately half as much memory as null-text inversion.

\subsection{Editing}
We next demonstrate the feasibility of editing real images by combining our inversion method with existing image editing methods. Our method is independent of the image editing approach and is principally compatible with any method that uses CFG, allowing for the selection of an appropriate image editing method depending on the objective. Here, we verify the effectiveness of our method for real-image editing using prompt-to-prompt~\cite{PromptToPrompt} in the same manner as in \cite{NullTextInv}.

The rightmost column of Table~\ref{tab:mainresult} shows CLIP scores of editing results by prompt-to-prompt with the three methods compared. Taking account of the standard errors, one can see that the proposed method and null-text inversion achieved almost the same CLIP scores. Although the score of DDIM inversion was the best, by considering the scores in conjunction with reconstruction quality, the editing quality of the proposed method was comparable to that of null-text inversion. 
In addition, we also compared our method with Imagic~\cite{Imagic} as another editing method. The editing quality of the proposed method was also better than that of Imagic.
For qualitative evaluation, the rightmost column of Fig.~\ref{fig:reconst-grid} shows examples of real-image editing via prompt-to-prompt using the proposed method. The proposed method managed to maintain the composition while editing the image according to the modified prompt, such as replacing the objects and changing the background. Additional editing examples are provided in Appendices~\ref{sec:editp2p} and \ref{sec:editsded}. These observations show that our inversion method can be combined with editing methods like prompt-to-prompt to enable ultrafast real-image editing.

\subsection{Number of sampling steps}

\begin{figure*}[t]
  \centering
  \includegraphics[width=0.75\linewidth]{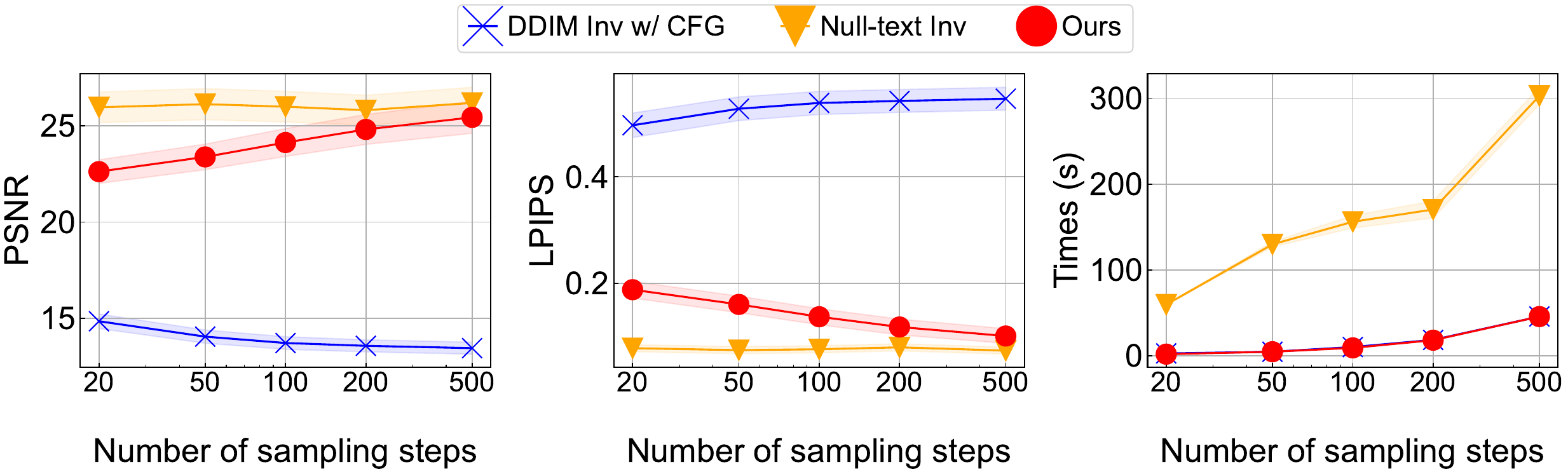}
  \caption{\textbf{Reconstruction quality and speed versus the number of sampling steps.} Higher PSNR is better (left), lower LPIPS is better (middle), and shorter execution time is better (right). 
  Shadings indicate 95\% confidence intervals.}
  \label{fig:step-vs-score}
\end{figure*}

\begin{figure*}[h]
  \centering
  \includegraphics[width=0.55\linewidth]{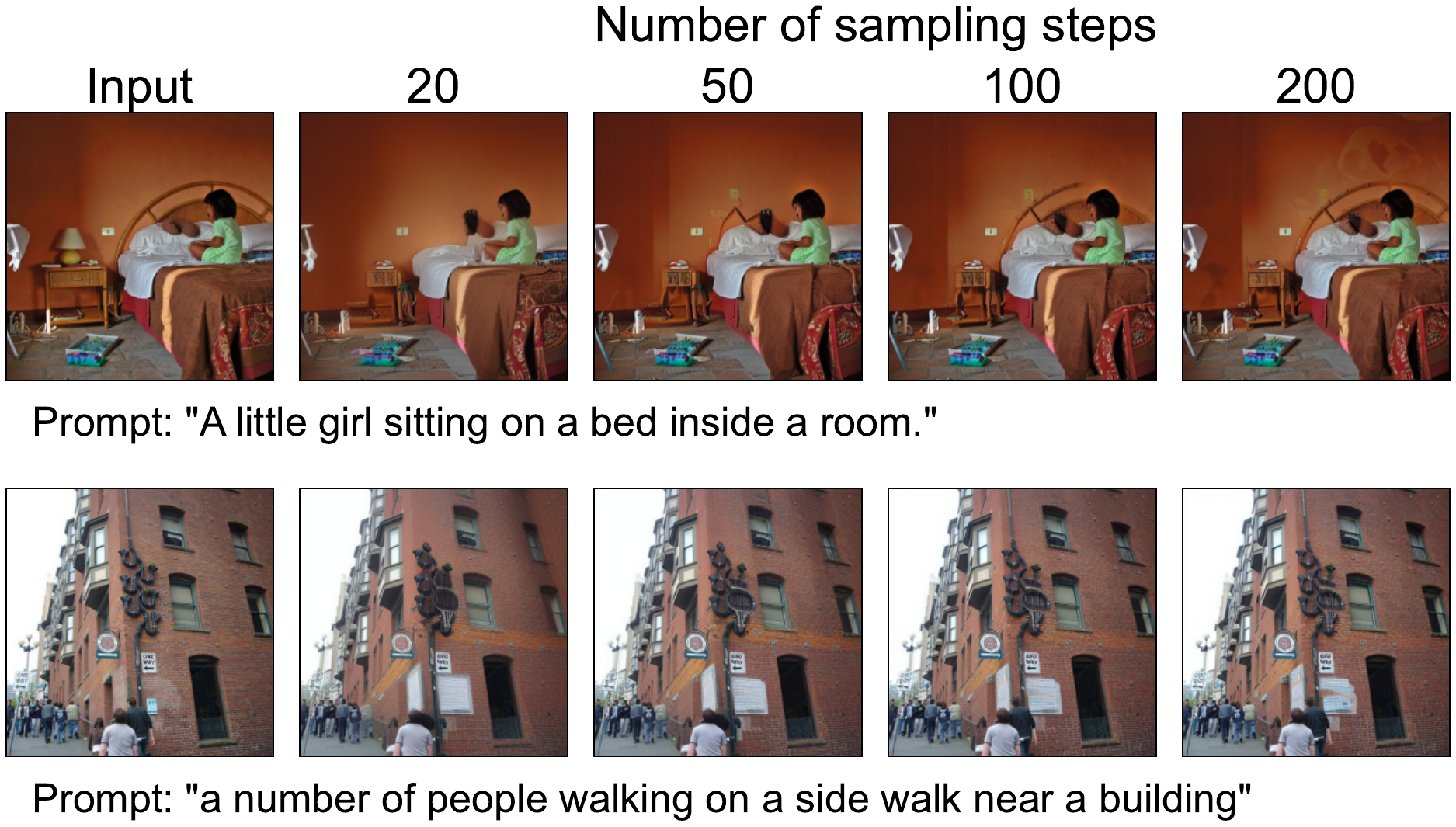}
  \caption{\textbf{Reconstructed images when changing the number of sampling steps.} The images became more similar to the input images as the number of sampling steps increased.}  \label{fig:step-vs-reconst}
\end{figure*}

As the proposed method allows ultrafast reconstruction/editing, 
one may be able to use a larger number of sampling steps 
to further improve reconstruction quality, at the expense of reduced speed. 
To investigate the relationship between the number of sampling steps and reconstruction quality, we measured the PSNR and LPIPS using five different sampling steps: 20, 50, 100, 200, and 500.

Figure~\ref{fig:step-vs-score} shows PSNR, LPIPS, and speed versus the number of sampling steps by the three methods. 
Although results with high enough quality were obtained with 50 sampling steps, 
increasing the number of sampling steps further improved the reconstruction quality of the proposed method, approaching that of null-text inversion. 
It should be noted that the total execution time is roughly given by the product of the execution time per sampling step and the number of sampling steps, so that even if the proposed inversion method is performed with 500 sampling steps, it would still take less time than executing null-text inversion with 50 sampling steps thanks to the $30\times$ speedup. In fact, Fig.~\ref{fig:step-vs-score} right shows the time taken for inversion; with 500 sampling steps, it took 46 seconds, which is still approximately three times faster than the null-text inversion with 50 sampling steps, which took 130 seconds. 
We would like to note that in Fig.~\ref{fig:step-vs-score} right 
the execution time of null-text inversion was not proportional 
to the number of sampling steps, 
since in our experimental setting 
the early stopping employed in the null-text optimization 
was more effective as the number of sampling steps became larger. 

Figure~\ref{fig:step-vs-reconst} describes how the reconstructed image changed as the number of sampling steps was increased. Even with a small number of sampling steps, such as 20, the input image's objects and composition were successfully reconstructed. Focusing on the finer details, for example, the head of the bed and the desk in the first row, and the wall color and pipes on the wall in the second row, we observe that the reconstruction quality improved as the number of sampling steps was increased. This improvement is generally imperceptible at first glance, suggesting that conventionally adopted numbers of sampling steps, such as 20 and 50 sampling steps, yield sufficiently satisfactory reconstruction results for practical purposes.

\begin{figure}[ht]
  \centering
  \includegraphics[width=1.0\linewidth]{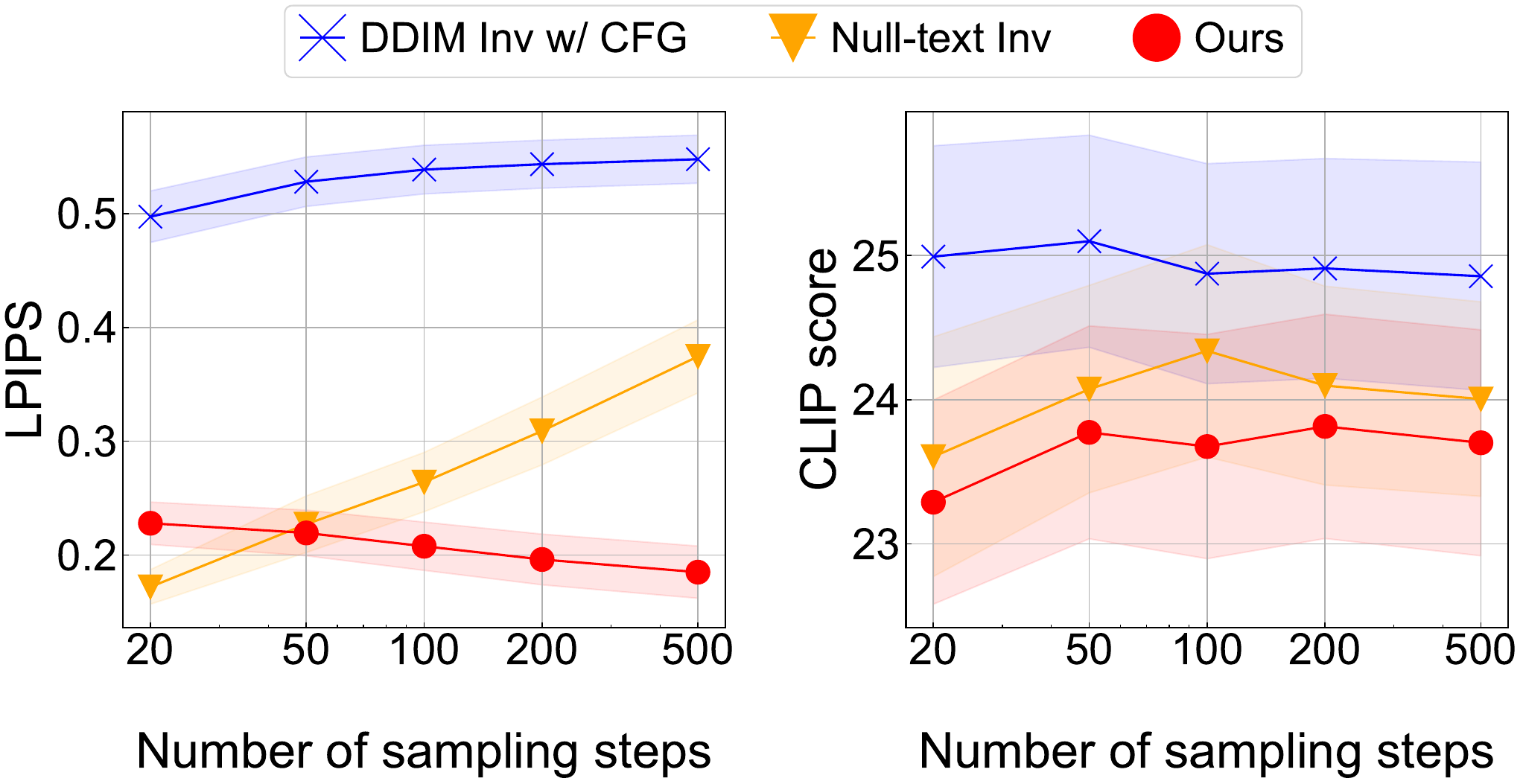}
  \caption{
  \textbf{Editing quality versus the number of sampling steps.} lower LPIPS is better (left), and higher CLIP scores is better (right).
  Shadings indicate 95\% confidence intervals.
  }
  \label{fig:step-vs-editing}
\end{figure}

To evaluate image editing quality against sampling steps, we measured LPIPS and CLIP scores. We calculated LPIPS between the edited images and their original counterparts, and CLIP scores between the edited images and the corresponding editing prompts. These measures are essential for evaluation, as image editing quality can be assessed by how well it preserves the original image structure and how faithfully it adheres to the editing prompt. Figure~\ref{fig:step-vs-editing} illustrates LPIPS and CLIP scores as a function of the number of sampling steps for the three methods. In terms of LPIPS, the proposed method better preserved image structure compared with null-text inversion when the number of sampling steps exceeded 50. Regarding CLIP scores, our method achieved comparable results to null-text inversion, considering the confidence interval. Although DDIM inversion achieved the highest CLIP score, its overall editing quality was inferior, as evidenced by its poorer LPIPS results. Considering both measures, the proposed method demonstrated superior image editing quality compared with null-text inversion when the number of sampling steps exceeded 50.

\section{Limitations}
\begin{figure}
    \centering
    \includegraphics[width=1.0\linewidth]{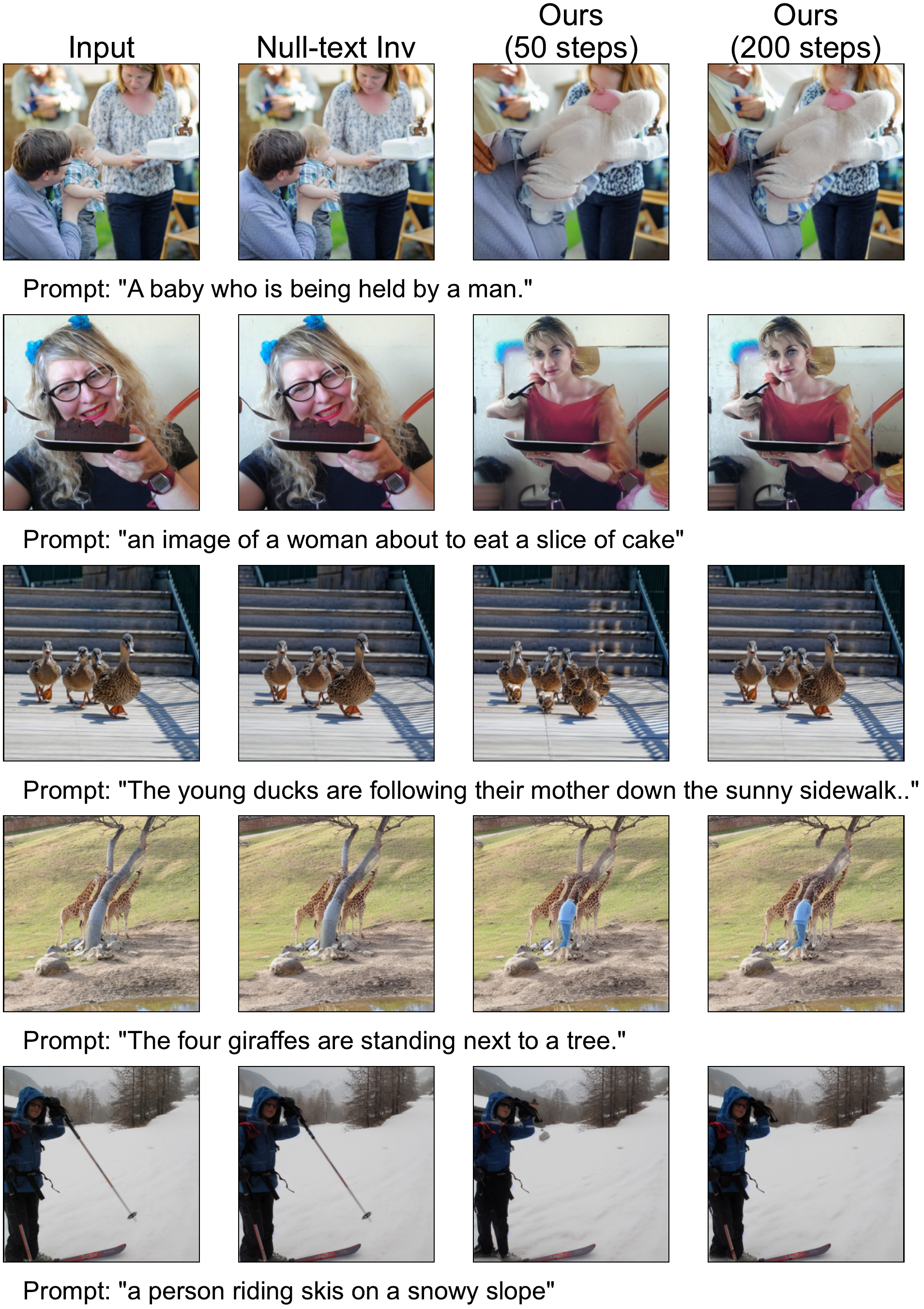}
    \caption{Additional failure cases of our method.}
    \label{fig:add_failure}
\end{figure}

A limitation of the proposed method is that the average reconstruction quality does not reach that of null-text inversion. As demonstrated in the previous section, the difference is generally imperceptible at first glance; however, there were instances where our inversion method failed significantly.

Figure~\ref{fig:add_failure} shows failure cases of our method.
In all the cases shown, our method failed to reconstruct the images in 50 sampling steps, whereas null-text inversion successfully reconstructed them.
The first two rows show failures due to the disappearance of people, where the objects were either reconstructed as non-human or as different persons.
The third and fourth rows show failures due to the color gradient being reconstructed as separate objects, such as a single duck being reconstructed as scattered pieces, and a tree trunk being reconstructed as a different object.
The last row shows a failure due to the disappearance of a tiny object, where one of the ski poles was missing.
The failures of reconstruction of humans could be attributed to characteristics of Stable Diffusion's AutoEncoder. In such cases, employing a more effective encoder-decoder pair may result in improvements.
Moreover, as can be observed in the duck example, the reconstruction quality can be improved by increasing the number of sampling steps.

Although failures in post-reconstruction image editing may occur, our inversion method is independent of editing methods, making the related discussion beyond the scope of this paper.

\section{Conclusions}
We have proposed negative-prompt inversion, which enables real-image inversion in diffusion models without the need for optimization. Experimentally, it produced visually high-fidelity reconstruction results comparable to inversion methods requiring optimization, while achieving a remarkable speed-up of over 30 times. 
Furthermore, we discovered that increasing the number of sampling steps further improved the reconstruction quality while maintaining faster computational time than existing methods. 

On the basis of these results, our method provides a practical approach for real-image reconstruction. This utility excels in high-computational-cost scenarios, such as video editing, where our method proves to be even more beneficial. Moreover, by parallelizing multiple GPUs and optimizing the program, there is potential for our method to achieve higher throughput and lower latency, where even the real-time processing would be possible. Although the proposed approach reduces computational costs and is available to any user, it does not encourage socially inappropriate use.

{\small
\bibliographystyle{ieee_fullname}
\bibliography{diffusion}
}

\vfill\break
\twocolumn[{%
\renewcommand\twocolumn[1][]{#1}%

\def\toptitlebar{\hrule height4pt\vskip .25in\vskip-\parskip}
\def\bottomtitlebar{\vskip .29in\vskip-\parskip\hrule height1pt\vskip .09in}

\normalsize
\makeatletter
  \vbox{%
    \hsize\textwidth
    \linewidth\hsize
    \vskip 0.1in
    \toptitlebar
    \centering
    {\LARGE\bf Supplementary Material: Negative-prompt Inversion: Fast Image Inversion for Editing with Text-guided Diffusion Models \par}
    \bottomtitlebar}
\makeatother

}]

\appendix
\section{Justifying arguments}
\label{sec:justification}
\subsection{Theoretical consideration}
\label{sec:theory}
In this appendix, we firstly provide a continuous-time 
description of DDPM and DDIM processes. 
We start with the stochastic differential equation 
describing continuous-time random diffusion of particles 
in a $D$-dimensional space:
\begin{equation}
    \label{eq:SDE}
    d\bm{z}=-\gamma_t\bm{z}+\sqrt{2\gamma_t}d\bm{W},
\end{equation}
where $W$ is the $D$-dimensional Wiener process 
and where the time-dependent decay parameter $\gamma_t>0$ 
is a deterministic and integrable function of $t$.
If one lets $\gamma_t$ to be independent of $t$, 
then~\eqref{eq:SDE} describes what is called 
the Ornstein-Urlenbech (OU) process, 
so that \eqref{eq:SDE} can be regarded as a generalized version of the OU process. 
The distribution $p_t(\bm{z})$ of the random particles 
following the diffusion process~\eqref{eq:SDE} at time $t$ 
is known to follow the Fokker-Planck equation 
\begin{equation}
    \label{eq:FP}
    \frac{\partial p_t}{\partial t}
    =\gamma_t\{\nabla(\bm{z}p_t)+\Delta p_t\}.
\end{equation}
The solution of~\eqref{eq:FP} given the initial condition $p_0(\bm{z})=\delta(\bm{z}|_{t=0}-\bm{z}_0)$, 
i.e., all the random particles are located at the position 
$\bm{z}_0$ at time 0, or equivalently, 
one starts the diffusion process with a sample 
located at $\bm{z}_0$, is evaluated as 
\begin{equation}
    p_t(\bm{z}\mid\bm{z}|_{t=0}=\bm{z}_0)
    =\mathcal{N}\left(\sqrt{\alpha_t}\bm{z}_0,(1-\alpha_t)\bm{I}\right),
\end{equation}
where 
\begin{equation}
    \label{eq:def_alpha_t}
    \alpha_t:=\exp\left(-\int_0^t\gamma_s\,ds\right).
\end{equation}
We also write it as 
\begin{equation}
    \bm{z}_t|_{\bm{z}_0}
    \sim\mathcal{N}\left(\sqrt{\alpha_t}\bm{z}_0,(1-\alpha_t)\bm{I}\right).
\end{equation}
Furthermore, for $s\le t$, the conditional distribution 
of the particles at time $t$ conditional on 
the particle located at $\bm{z}_s$ at time $s$ is given by 
\begin{equation}
    \bm{z}_t|_{\bm{z}_s}
    \sim\mathcal{N}\left(\sqrt{\frac{\alpha_t}{\alpha_s}}\bm{z}_s,\left(1-\frac{\alpha_t}{\alpha_s}\right)\bm{I}\right).
\end{equation}
Comparing these formulas with those in~\cite[Section 2]{ho2020ddpm} 
reveals that discretizing the above process in time 
will give us the formulation of DDPM. 

Assuming that the Fokker-Planck equation~\eqref{eq:FP} 
is given, the corresponding random process is not unique, 
and there are several other random processes 
which are consistent with~\eqref{eq:FP} than 
the above generalized OU process~\eqref{eq:SDE}. 
For example, 
we may take a specific time instant $t=T>0$ 
and require the particle position $\bm{z}_T$ 
at time $T$ given the initial position $\bm{z}_0$ 
at time $t=0$ to follow the Gaussian distribution 
\begin{equation}
    \label{eq:ddimSM}
    \bm{z}_T|_{\bm{z}_0}
    \sim\mathcal{N}\left(\sqrt{\alpha_T}\bm{z}_0,(1-\alpha_T)\bm{I}\right),
\end{equation}
and then determine the particle position $\bm{z}_t$ at 
any time $t\ge0$ as 
\begin{equation}
\label{eq:noncausal}
    \bm{z}_t=\sqrt{\frac{1-\alpha_t}{1-\alpha_T}}\bm{z}_T
    +\left(\sqrt{\alpha_t}-\sqrt{\frac{\alpha_T}{1-\alpha_T}}\sqrt{1-\alpha_t}\right)\bm{z}_0.
\end{equation}
One can then confirm that the conditional distribution of 
the particle position $\bm{z}_t$ at time $t$ 
conditional on $\bm{z}_0$ is given by 
$\mathcal{N}\left(\sqrt{\alpha_t}\bm{z}_0,
(1-\alpha_t)\bm{I}\right)$, which demonstrates 
that the distribution of the particles 
following the above process also satisfies 
the same Fokker-Planck equation~\eqref{eq:FP}. 
One can furthermore show that discretizing the above 
process in time will give us 
the formulation of DDIM~\cite{SongICLR2021}. 

When considering $\bm{z}_t$ given $\bm{z}_0$ and $\bm{z}_T$, 
let $\bm{d}_t:=(\bm{z}_t-\sqrt{\alpha_t}\bm{z}_0)/\sqrt{1-\alpha_t}$ 
be the normalized noise component in $\bm{z}_t$ 
relative to $\sqrt{\alpha_t}\bm{z}_0$. 
One can show, by rearranging terms in~\eqref{eq:noncausal}, 
that $\bm{d}_t=\bm{d}_T$ holds for any $t$. 
Letting $\bm{d}:=\bm{d}_t$ due to the independence 
of $\bm{d}_t$ on $t$, 
one can furthermore show, via~\eqref{eq:ddimSM}, 
that $\bm{d}$ given $\bm{z}_0$ follows the standard 
Gaussian distribution $\mathcal{N}(\mathbf{0},\bm{I})$. 
In other words, given $\bm{z}_0$ and $\bm{z}_T$, 
the normalized noise component $\bm{d}_t$ 
in DDIM does not depend on $t$. 
Therefore, the diffusion paths in DDIM 
are straight half-lines 
$\{\bm{z}_t=\sqrt{\alpha_t}\bm{z}_0+\sqrt{1-\alpha_t}\bm{d}:
t\ge0,\bm{d}\sim\mathcal{N}(\mathbf{0},\bm{I})\}$ 
starting from $\bm{z}_0$ with random velocity 
$\bm{d}\sim\mathcal{N}(\mathbf{0},\bm{I})$.

\if0
In this appendix, we firstly show that 
in the non-Markovian forward process 
of DDIM one can regard that $\bm{z}_{1:T}$ are all on a straight line, 
that is, there exists $\bm{d}$ such that 
$\bm{z}_t=\sqrt{\alpha_t}\bm{z}_0+\sqrt{1-\alpha_t}\bm{d}$ 
for every $t=1,\ldots,T$. 
On the basis of this observation, we next show 
a detailed derivation of the proposed method. 
An issue of conditioning on $\bm{z}_0$ will be discussed 
at the end of this section. 

In the formulation of DDIM, \citep{SongICLR2021} considers 
a family of non-Markovian forward processes indexed by $\bm{\sigma}\in\mathbb{R}_{\ge0}^T$, defined as 
\begin{align}
    q_{\bm{\sigma}}(\bm{z}_{1:T}\mid\bm{z}_0)
    &:=q_{\bm{\sigma}}(\bm{z}_T\mid\bm{z}_0)
    \prod_{t=2}^Tq_{\bm{\sigma}}(\bm{z}_{t-1}\mid\bm{z}_t,\bm{z}_0),
    \\
    q_{\bm{\sigma}}(\bm{z}_T\mid\bm{z}_0)
    &:=\mathcal{N}(\sqrt{\alpha_T}\bm{z}_0,(1-\alpha_T)\bm{I}),
    \\
    q_{\bm{\sigma}}(\bm{z}_{t-1}\mid\bm{z}_t,\bm{z}_0)
    &:=\mathcal{N}\left(\sqrt{\alpha_{t-1}}\bm{z}_0
    +\sqrt{1-\alpha_{t-1}-\sigma_t^2}
    \frac{\bm{z}_t-\sqrt{\alpha_t}\bm{z}_0}{\sqrt{1-\alpha_t}},
    \sigma_t^2\bm{I}\right),\quad\forall t>1.
\end{align}
With the above forward process one has 
\begin{equation}
    q_{\bm{\sigma}}(\bm{z}_t\mid\bm{z}_0)
    =\mathcal{N}(\sqrt{\alpha_t}\bm{z}_0,(1-\alpha_t)\bm{I})
\end{equation}
for all $t=1,\ldots,T$.
The forward process in DDIM is obtained from the above family 
via taking the deterministic limit $\bm{\sigma}\to\mathbf{0}$. 

Let $\bm{d}_t:=(\bm{z}_t-\sqrt{\alpha_t}\bm{z}_0)/\sqrt{1-\alpha_t}$ 
be the normalized noise component in $\bm{z}_t$ 
relative to $\sqrt{\alpha_t}\bm{z}_0$. 
We study the joint distribution of $\bm{d}_{1:T}$ conditional 
on $\bm{z}_0$, induced by the above non-Markovian forward process. 
One has 
\begin{align}
    q_{\bm{\sigma}}(\bm{d}_T\mid\bm{z}_0)
    &=\mathcal{N}(\mathbf{0},\bm{I}),
    \\
    q_{\bm{\sigma}}(\bm{d}_{t-1}\mid\bm{d}_t,\bm{z}_0)
    &=\mathcal{N}\left(\sqrt{1-\frac{\sigma_t^2}{1-\alpha_{t-1}}}\bm{d}_t,
    \frac{\sigma_t^2}{1-\alpha_{t-1}}\bm{I}\right),\quad\forall t>1,
\end{align}
and hence 
\begin{equation}
    q_{\bm{\sigma}}(\bm{d}_t\mid\bm{z}_0)
    =\mathcal{N}(\mathbf{0},\bm{I})
\end{equation}
for all $t=1,\ldots,T$.
Conditional on $\bm{z}_0$, $\bm{d}_{1:T}$ are jointly Gaussian 
with mean zero. 
\begin{proposition}
\label{prop:ddim_align}
Given $\bm{z}_0$, let $\bm{z}_{1:T}$ be generated 
from $\bm{z}_0$ via the forward process of DDIM, 
and let $\bm{d}_{1:T}$ be defined via 
$\bm{d}_t=(\bm{z}_t-\sqrt{\alpha_t}\bm{z}_0)/\sqrt{1-\alpha_t}$.
Then, for any $m=0,1,\ldots,T-1$ and any $t=m+1,\ldots,T$, 
one has, conditional on $\bm{z}_0$, 
\begin{equation}
    \lim_{\bm{\sigma}\to\mathbf{0}}(\bm{d}_{t-m}-\bm{d}_t)=\mathbf{0},
\end{equation}
where the convergence is in the mean-square sense.
\end{proposition}
\begin{proof}
For any $m=0,1,\ldots,T-1$ and any $t=m+1,\ldots,T$, 
the cross-covariance between $\bm{d}_{t-m}$ and $\bm{d}_m$ 
conditional on $\bm{z}_0$ is given by
\begin{equation}
    \mathop{\mathrm{Cov}}(\bm{d}_{t-m},\bm{d}_t\mid\bm{z}_0)
    =\prod_{i=0}^{m-1}
    \left(1-\frac{\sigma_{t-i}^2}{1-\alpha_{t-i-1}}\right)^{1/2}\bm{I}.
\end{equation}
One then has, in the deterministic limit $\bm{\sigma}\to0$,  
\begin{align}
    \mathbb{E}(\|\bm{d}_{t-m}-\bm{d}_t\|_2^2\mid\bm{z}_0)
    &=\mathbb{E}(\|\bm{d}_{t-m}\|_2^2\mid\bm{z}_0)
    +\mathbb{E}(\|\bm{d}_t\|_2^2\mid\bm{z}_0)
    -2\mathop{\mathrm{tr}}
    \mathop{\mathrm{Cov}}(\bm{d}_{t-m},\bm{d}_t\mid\bm{z}_0)
    \nonumber\\
    &=2D\left\{1-\prod_{i=0}^{m-1}
    \left(1-\frac{\sigma_{t-i}^2}{1-\alpha_{t-i-1}}\right)^{1/2}
    \right\}\to0,
\end{align}
where $D$ is the dimension of $\bm{z}_t$. 
This proves that, 
in the deterministic limit $\bm{\sigma}\to\mathbf{0}$ of DDIM, 
$\bm{d}_{t-m}-\bm{d}_t$ converges to $\mathbf{0}$ 
in the mean-square sense. 
\end{proof}
In other words, in DDIM one can regard 
that, given $\bm{z}_0$, 
$\bm{d}_t=(\bm{z}_t-\sqrt{\alpha_t}\bm{z}_0)/\sqrt{1-\alpha_t}$ 
is independent of $t$, and that 
$\bm{z}_{1:T}$ are all aligned along the straight line connecting 
$\bm{z}_0$ and $\bm{z}_T$. 
\fi

Assuming that $\bm{z}_t$ is available, 
the model $\bm{\epsilon}_\theta(\bm{z}_t,t)$ attempts to 
estimate 
the velocity $\bm{d}_t$ from $\bm{z}_t$, 
which in turn yields 
an estimate $f_\theta^{(t)}(\bm{z}_t):=(\bm{z}_t-\sqrt{1-\alpha_t}\bm{\epsilon}_\theta(\bm{z}_t,t))/\sqrt{\alpha_t}$ 
of $\bm{z}_0$, 
and then one can use it to estimate $\bm{z}_s$ for any $s$ by plugging it 
into the equality $\bm{d}_t=\bm{d}_s$. 
Specifically, $\bm{z}_s$ is estimated via 
\begin{align}
    \bm{z}_s
    &=\sqrt{\alpha_s}\bm{z}_0+\sqrt{1-\alpha_s}
    \frac{\bm{z}_t-\sqrt{\alpha_t}\bm{z}_0}{\sqrt{1-\alpha_t}}
    \nonumber\\
    &\approx\sqrt{\alpha_s}
    \left(\frac{\bm{z}_t-\sqrt{1-\alpha_t}\bm{\epsilon}_\theta(\bm{z}_t,t)}{\sqrt{\alpha_t}}\right)
    +\sqrt{1-\alpha_s}\bm{\epsilon}_\theta(\bm{z}_t,t)
    \nonumber\\
    &=\sqrt{\frac{\alpha_s}{\alpha_t}}\bm{z}_t
    +\sqrt{\alpha_s}\left(\sqrt{\frac{1-\alpha_s}{\alpha_s}}
    -\sqrt{\frac{1-\alpha_t}{\alpha_t}}\right)
    \bm{\epsilon}_\theta(\bm{z}_t,t).
\end{align}
\if0
\begin{align}
    \bm{z}_{t-1}
    &=\sqrt{\alpha_{t-1}}\bm{z}_0+\sqrt{1-\alpha_{t-1}-\sigma_t^2}
    \frac{\bm{z}_t-\sqrt{\alpha_t}\bm{z}_0}{\sqrt{1-\alpha_t}}
    +\sigma_t\bm{n}_t
    \nonumber\\
    &\approx\sqrt{\alpha_{t-1}}
    \left(\frac{\bm{z}_t-\sqrt{1-\alpha_t}\bm{\epsilon}_\theta(\bm{z}_t,t)}{\sqrt{\alpha_t}}\right)
    +\sqrt{1-\alpha_{t-1}-\sigma_t^2}\bm{\epsilon}_\theta(\bm{z}_t,t)
    +\sigma_t\bm{n}_t
    \nonumber\\
    &=\sqrt{\frac{\alpha_{t-1}}{\alpha_t}}\bm{z}_t
    +\sqrt{\alpha_{t-1}}\left(\sqrt{\frac{1-\alpha_{t-1}-\sigma_t^2}{\alpha_{t-1}}}
    -\sqrt{\frac{1-\alpha_t}{\alpha_t}}\right)
    \bm{\epsilon}_\theta(\bm{z}_t,t)
    +\sigma_t\bm{n}_t,
\end{align}
where $\bm{n}_t\sim\mathcal{N}(\mathbf{0},\bm{I})$ is 
a standard Gaussian random noise vector independent of $\bm{z}_0$. 
In the generative process of DDIM the noise variances $\sigma_t$ are 
sent to zero, so that 
\fi
When one takes $s=t\pm1$, the above formula is reduced to 
\begin{align}
    \bm{z}_{t\pm1}
    &\approx\sqrt{\frac{\alpha_{t\pm1}}{\alpha_t}}\bm{z}_t
    +\sqrt{\alpha_{t\pm1}}\left(\sqrt{\frac{1}{\alpha_{t\pm1}}-1}
    -\sqrt{\frac{1}{\alpha_t}-1}\right) \nonumber \\
    &\hphantom{\approx} \times \bm{\epsilon}_\theta(\bm{z}_t,t),
    \label{eq:ddim2}
\end{align}
which corresponds to~\eqref{eq:ddimInv} and \eqref{eq:ddim} in the main text. 

\if0
We next show a derivation of the formula~\eqref{eq:ddimInv} 
in the main text for DDIM inversion. 
The joint distribution of $\bm{d}_{t-1}$ and $\bm{d}_t$ 
conditional on $\bm{z}_0$ is a zero-mean Gaussian distribution 
with covariance matrix $\bm{\Sigma}$ given by 
\begin{equation}
    \bm{\Sigma}=\left(
    \begin{array}{cc}
    1&\sqrt{1-\frac{\sigma_t^2}{1-\alpha_{t-1}}}\\
    \sqrt{1-\frac{\sigma_t^2}{1-\alpha_{t-1}}}&1
    \end{array}\right)\otimes\bm{I},   
\end{equation}
where $\otimes$ denotes the Kronecker product. 
It immediately follows that the distribution 
of $\bm{d}_t$ conditional on $\bm{d}_{t-1}$ and $\bm{z}_0$ is given by 
\begin{equation}
    q_{\bm{\sigma}}(\bm{d}_t\mid\bm{d}_{t-1},\bm{z}_0)
    =\mathcal{N}\left(\sqrt{1-\frac{\sigma_t^2}{1-\alpha_{t-1}}}\bm{d}_{t-1},
    \frac{\sigma_t^2}{1-\alpha_{t-1}}\bm{I}\right),
    \quad\forall t>1.
\end{equation}
It can be translated into the distribution of $\bm{z}_t$ 
conditional on $\bm{z}_{t-1}$ and $\bm{z}_0$, as 
\begin{equation}
    q_{\bm{\sigma}}(\bm{z}_t\mid\bm{z}_{t-1},\bm{z}_0)
    =\mathcal{N}\left(\sqrt{\alpha_t}\bm{z}_0
    +\sqrt{1-\alpha_t+\sigma_t^2}\frac{\bm{z}_{t-1}-\sqrt{\alpha_{t-1}}\bm{z}_0}{\sqrt{1-\alpha_{t-1}}}
    ,\sigma_t^2\bm{I}\right)
\end{equation}
As above, assuming that $\bm{z}_{t-1}$ is available, 
$\bm{\epsilon}_\theta(\bm{z}_{t-1},t-1)$ provides an estimate 
of $\bm{d}_{t-1}$ using $\bm{z}_{t-1}$, which in turn yields 
an estimate $(\bm{z}_{t-1}-\sqrt{1-\alpha_{t-1}}\bm{\epsilon}_\theta(\bm{z}_{t-1},t-1))/\sqrt{\alpha_{t-1}}$ 
of $\bm{z}_0$. 
One can then use it to estimate $\bm{z}_t$ 
by plugging it into $q_{\bm{\sigma}}(\bm{z}_t\mid\bm{z}_{t-1},\bm{z}_0)$, 
as 
\begin{align}
    \bm{z}_t
    &=\sqrt{\alpha_t}\bm{z}_0+\sqrt{1-\alpha_t-\sigma_t^2}
    \frac{\bm{z}_{t-1}-\sqrt{\alpha_{t-1}}\bm{z}_0}{\sqrt{1-\alpha_{t-1}}}
    +\sigma_t\tilde{\bm{n}}_t
    \nonumber\\
    &\approx\sqrt{\alpha_t}\left(
    \frac{\bm{z}_{t-1}-\sqrt{1-\alpha_{t-1}}\bm{\epsilon}_\theta(\bm{z}_{t-1},t-1)}{\sqrt{\alpha_{t-1}}}
    \right)
    +\sqrt{1-\alpha_t-\sigma_t^2}\bm{\epsilon}_\theta(\bm{z}_{t-1},t-1)
    +\sigma_t\tilde{\bm{n}}_t
    \nonumber\\
    &=\sqrt{\frac{\alpha_t}{\alpha_{t-1}}}\bm{z}_{t-1}
    +\sqrt{\alpha_t}\left(\sqrt{\frac{1-\alpha_t-\sigma_t^2}{\alpha_t}}
    -\sqrt{\frac{1-\alpha_{t-1}}{\alpha_{t-1}}}\right)
    \bm{\epsilon}_\theta(\bm{z}_{t-1},t-1)
    +\sigma_t\tilde{\bm{n}}_t,
\end{align}
where $\tilde{\bm{n}}\sim\mathcal{N}(\mathbf{0},\bm{I})$ 
is a standard Gaussian noise vector. 
In the deterministic limit $\bm{\sigma}\to\mathbf{0}$, 
the above formula is reduced to 
\begin{equation}
\label{eq:ddim2inv}
    \bm{z}_t
    \approx\sqrt{\frac{\alpha_t}{\alpha_{t-1}}}\bm{z}_{t-1}
    +\sqrt{\alpha_t}\left(\sqrt{\frac{1}{\alpha_t}-1}
    -\sqrt{\frac{1}{\alpha_{t-1}}-1}\right)
    \bm{\epsilon}_\theta(\bm{z}_{t-1},t-1),\quad\forall t>1,
\end{equation}
which corresponds to the formula~\eqref{eq:ddimInv} 
for DDIM inversion. 
It should be noted that in deriving~\eqref{eq:ddim2inv} 
we have not assumed that the diffusion steps are small enough. 
\fi

The argument presented so far is based on 
conditioning on sample $\bm{z}_0$, 
which is not justifiable in the actual process of DDIM sampling 
where there exists more than one sample and where the model does not look at $\bm{z}_0$. 
We thus extend the above argument via assuming $\bm{z}_0$ to 
be generated according to a certain probability distribution $p(\bm{z}_0)$. 
More concretely, we assume $\bm{z}_0\sim p(\bm{z}_0)$ 
and $\bm{d}\sim\mathcal{N}(\mathbf{0},\bm{I})$, 
which induces the diffusion path $\bm{z}_t=\sqrt{\alpha_t}\bm{z}_0+\sqrt{1-\alpha_t}\bm{d}$, $t\ge0$, in DDIM according to the above discussion. 
Consequently, at position $\bm{z}$ and at time $t$, 
the ``velocity field'' $\bm{\epsilon}(\bm{z},t)$ to be learned by the model $\bm{\epsilon}_\theta(\bm{z},t)$ 
is not determined by a single sample $\bm{z}_0$ 
but given by the posterior mean  
of $\bm{d}=(\bm{z}-\sqrt{\alpha_t}\bm{z}_0)/\sqrt{1-\alpha_t}$ 
with respect to the posterior distribution 
of $\bm{z}_0$ given $\bm{z}$, which is obtained from 
the prior distributions $\bm{z}_0\sim p(\bm{z}_0)$ 
and $\bm{d}\sim\mathcal{N}(\mathbf{0},\bm{I})$, 
as well as the likelihood 
$p(\bm{z}\mid\bm{z}_0,\bm{d})=\delta(\bm{z}-\sqrt{\alpha_t}\bm{z}_0-\sqrt{1-\alpha_t}\bm{d})$.

\begin{proposition}
\label{prop:velfield}
Assume $\bm{z}_0\sim p(\bm{z}_0)$ and $\bm{d}\sim\mathcal{N}(\mathbf{0},\bm{I})$. 
Then the velocity field $\bm{\epsilon}(\bm{z},t)$ in DDIM 
at position $\bm{z}$ 
and at time $t$, which is to be learned by the model 
$\bm{\epsilon}_\theta(\bm{z},t)$, is given by 
\begin{equation}
\label{eq:true_noise}
    \bm{\epsilon}(\bm{z},t)=
    \frac{\left\langle\frac{\bm{z}-\sqrt{\alpha_t}\bm{z}_0}{\sqrt{1-\alpha_t}}p_G\left(\frac{\bm{z}-\sqrt{\alpha_t}\bm{z}_0}{\sqrt{1-\alpha_t}}\right)\right\rangle_{\bm{z}_0}}{\left\langle p_G\left(\frac{\bm{z}-\sqrt{\alpha_t}\bm{z}_0}{\sqrt{1-\alpha_t}}\right)\right\rangle_{\bm{z}_0}},
\end{equation}
where
\begin{equation}
    p_G(\bm{d})=\frac{1}{(2\pi)^{D/2}}e^{-\|\bm{d}\|_2^2/2}
\end{equation}
denotes the probability density function of 
the $D$-dimensional standard Gaussian distribution, 
and where $\langle\cdot\rangle_{\bm{z}_0}$ denotes expectation 
with respect to $\bm{z}_0\sim p(\bm{z}_0)$. 
\end{proposition}
\begin{proof}
The joint distribution of $\bm{z}_0$ and $\bm{z}$ is given by 
\begin{align}
    p(\bm{z}_0,\bm{z})
    &=\int p(\bm{z}\mid\bm{z}_0,\bm{d})p(\bm{z}_0)p_G(\bm{d})\,d\bm{d}
    \nonumber\\
    &=\int \delta(\bm{z}-\sqrt{\alpha_t}\bm{z}_0-\sqrt{1-\alpha_t}\bm{d})p(\bm{z}_0)p_G(\bm{d})\,d\bm{d}
    \nonumber\\
    &=p_G\left(\frac{\bm{z}-\sqrt{\alpha_t}\bm{z}_0}{\sqrt{1-\alpha_t}}\right)p(\bm{z}_0),
\end{align}
from which the posterior distribution of $\bm{z}_0$ given $\bm{z}$ is 
obtained as 
\begin{equation}
    p(\bm{z}_0\mid\bm{z})=\frac{p_G\left(\frac{\bm{z}-\sqrt{\alpha_t}\bm{z}_0}{\sqrt{1-\alpha_t}}\right)p(\bm{z}_0)}{\left\langle p_G\left(\frac{\bm{z}-\sqrt{\alpha_t}\bm{z}_0}{\sqrt{1-\alpha_t}}\right)\right\rangle_{\bm{z}_0}},
\end{equation}
The velocity $\bm{\epsilon}(\bm{z},t)$ at $\bm{z}$ and $t$, 
to be learned by the model, is given by the posterior mean 
of $\bm{d}=(\bm{z}-\sqrt{\alpha_t}\bm{z}_0)/\sqrt{1-\alpha_t}$, 
which is represented as~\eqref{eq:true_noise}, 
proving the proposition. 
\end{proof}

It should be noted that the velocity field 
$\bm{\epsilon}(\bm{z},t)$ is deterministic: 
Equation~\eqref{eq:true_noise} shows that 
although it depends on the prior distribution $p(\bm{z}_0)$ 
and $\gamma_t$ via $\alpha_t$ as in~\eqref{eq:def_alpha_t} 
it is a non-random quantity. 
Despite its complex appearance, one can see that 
the velocity field $\bm{\epsilon}(\bm{z},t)$ 
in~\eqref{eq:true_noise} is continuous, 
and even continuously differentiable, in $\bm{z}$ and $t>0$. 
This continuity implies that, for $t,s>0$, when 
$|\alpha_t-\alpha_s|$ and $\|\bm{z}-\bm{z}'\|$ are small, 
one can expect $\bm{\epsilon}(\bm{z},t)\approx\bm{\epsilon}(\bm{z}',s)$ to hold.

In what follows, we provide a justifying argument 
for the proposed method, via extending the argument so far 
by incorporating conditioning into the model. 
It is straightforward to incorporate conditioning 
in the DDIM inversion and sampling formulae~\eqref{eq:ddim2}, 
by replacing the model $\bm{\epsilon}_\theta(\bm{z}_t,t)$ 
without conditioning with the conditional model 
$\bm{\epsilon}_\theta(\bm{z}_t,t,C)$, 
as shown in~\eqref{eq:ddimInv} and \eqref{eq:ddim} 
in the main text, 
where $C$ is the prompt embedding. 
In various applications, on the other hand, the reverse process 
using the DDIM sampling formula~\eqref{eq:ddim} is 
often combined with CFG to strengthen the effects of the conditioning, 
where the conditional model $\bm{\epsilon}_\theta(\bm{z}_t,t,C)$ 
is further replaced with 
\begin{align}
\label{eq:cfg}
    \bm{\tilde{\epsilon}}_\theta(\bm{z}_t,t,C,\varnothing)
    &= \bm{\epsilon}_\theta(\bm{z}_t,t,\varnothing)
    \nonumber\\
    &\hphantom{=}+w\left(\bm{\epsilon}_\theta(\bm{z}_t,t,C)
    -\bm{\epsilon}_\theta(\bm{z}_t,t,\varnothing)\right),
\end{align}
where $w\ge0$ is the guidance scale, which controls the strength of the conditioning, and where $\varnothing$ is the null-text embedding. 

The first step of null-text inversion is to obtain $\bm{z}^*_t$ for $t=1,\ldots,T$ by initializing $\bm{z}^*_0=\bm{z}_0$ 
and successively applying the forward process 
derived as the DDIM inversion formula
\begin{align}
\label{eq:z_star}
    \bm{z}^*_t
    &= \sqrt{\frac{\alpha_t}{\alpha_{t-1}}}\bm{z}^*_{t-1}
    +\sqrt{\alpha_t}\left(\sqrt{\frac{1}{\alpha_t}-1}
    -\sqrt{\frac{1}{\alpha_{t-1}}-1}\right)
    \nonumber\\
    &\hphantom{=}\times
    \bm{\epsilon}_\theta(\bm{z}^*_{t-1},t-1,C),
\end{align}
which is the same as~\eqref{eq:ddimInv} in the main text. 
\if0
where the model $\bm{\epsilon}_\theta(\bm{z}_{t-1},t-1)$ 
in~\eqref{eq:ddimInv} without conditioning has been replaced with 
the conditional model $\bm{\epsilon}_\theta(\bm{z}^*_{t-1},t-1,C)$. 
\fi
Next, starting from $\bar{\bm{z}}_T=\bm{z}_T^*$, 
we calculate the reverse diffusion process to obtain 
$\bar{\bm{z}}_t$ in the backward direction, 
while optimizing the null-text embedding $\varnothing_t$ 
at each diffusion step 
so that $\bar{\bm{z}}_t$ well reproduces $\bm{z}^*_t$. 
More specifically, for $t=T,T-1,\ldots,1$, 
$\bar{\bm{z}}_{t-1}$ is calculated via 
combining the DDIM sampling~\eqref{eq:ddim2} and CFG~\eqref{eq:cfg} as 
\begin{align}
\label{eq:z_tilde}
    &\bm{z}_{t-1}(\bar{\bm{z}}_t,t,C,\varnothing_t)
    \nonumber\\
    &=\sqrt{\frac{\alpha_{t-1}}{\alpha_{t}}}\bar{\bm{z}}_{t}
+\sqrt{\alpha_{t-1}}\left(\sqrt{\frac{1}{\alpha_{t-1}}-1}
    -\sqrt{\frac{1}{\alpha_{t}}-1}\right)
     \nonumber\\
&\hphantom{=}\times\tilde{\bm{\epsilon}}_\theta(\bar{\bm{z}}_{t},t,C,\varnothing_t),
\end{align}
which is the same as~\eqref{eq:nti} in the main text.

The null-text embedding $\varnothing_t$ is optimized to minimize the MSE between $\bm{z}_{t-1}(\bar{\bm{z}}_t,t,C,\varnothing_t)$ and $\bm{z}^*_{t-1}$ as 
\begin{equation}
\label{eq:mse}
    \min_{\varnothing_t} \|\bm{z}_{t-1}(\bar{\bm{z}}_t,t,C,\varnothing_t)-\bm{z}^*_{t-1}\|^2_2.
\end{equation}
The following proposition shows that the choice $\varnothing_t=C$ 
does minimize the MSE between $\bm{z}_{t-1}(\bar{\bm{z}}_t,t,C,\varnothing_t)$ and $\bm{z}^*_{t-1}$
under an ideal situation. 

\begin{proposition}
\label{prop:npi}
Assume that there is only one sample, 
and that the guidance scale $w$ in CFG is not equal to 1.
For any $t$, if the model $\bm{\epsilon}(\bm{z},t,C)$ is able to correctly predict the velocity field and if $\bm{z}^*_{t}=\bar{\bm{z}}_{t}$ holds true, then the difference between $\bm{z}_{t-1}(\bar{\bm{z}}_t,t,C,\varnothing_t)$ and $\bm{z}_{t-1}^*$
in null-text inversion 
is made equal to zero if and only if $\bm{\epsilon}_\theta(\bar{\bm{z}}_{t},t,\varnothing_t)$ is equal to $\bm{\epsilon}_\theta(\bar{\bm{z}}_{t},t,C)$.
\end{proposition}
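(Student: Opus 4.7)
The plan is to compute both $\bm{z}^*_{t-1}$ and $\bar{\bm{z}}_{t-1}$ as affine functions of the common point $\bar{\bm{z}}_t = \bm{z}^*_t$ driven by a single noise-prediction vector, and then compare. First I would invert the DDIM-inversion update~\eqref{eq:z_star}, which is affine in $\bm{z}^*_{t-1}$, $\bm{z}^*_t$, and $\bm{\epsilon}_\theta(\bm{z}^*_{t-1},t-1,C)$, to solve for $\bm{z}^*_{t-1}$. The coefficient of $\bm{z}^*_{t-1}$ is a nonzero scalar, so this is routine algebra and yields an expression of the form $\bm{z}^*_{t-1} = \sqrt{\alpha_{t-1}/\alpha_t}\,\bm{z}^*_t + \sqrt{\alpha_{t-1}}\bigl(\sqrt{1/\alpha_{t-1}-1} - \sqrt{1/\alpha_t-1}\bigr)\bm{\epsilon}_\theta(\bm{z}^*_{t-1},t-1,C)$.

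The key identification comes next. Under the hypothesis that the model correctly predicts the noise, $\bm{\epsilon}_\theta(\bm{z}_s,s,C) = \bm{d}_s$ for every $s$ along the trajectory. Proposition~\ref{prop:ddim_align} asserts that in the deterministic DDIM limit the normalized noise directions $\bm{d}_s$ all coincide, so in particular $\bm{\epsilon}_\theta(\bm{z}^*_{t-1},t-1,C) = \bm{\epsilon}_\theta(\bm{z}^*_t,t,C)$, and combined with $\bm{z}^*_t = \bar{\bm{z}}_t$ this equals $\bm{\epsilon}_\theta(\bar{\bm{z}}_t,t,C)$. Plugging this in collapses the expression for $\bm{z}^*_{t-1}$ into the reverse-step DDIM update driven by the purely conditional prediction, namely the same form as~\eqref{eq:z_tilde} but with $\tilde{\bm{\epsilon}}_\theta(\bar{\bm{z}}_t,t,C,\varnothing_t)$ replaced by $\bm{\epsilon}_\theta(\bar{\bm{z}}_t,t,C)$.

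Subtracting~\eqref{eq:z_tilde} from this expression, the affine term in $\bar{\bm{z}}_t$ cancels and one is left with
\[
\bar{\bm{z}}_{t-1} - \bm{z}^*_{t-1} = \sqrt{\alpha_{t-1}}\Bigl(\sqrt{\tfrac{1}{\alpha_{t-1}}-1} - \sqrt{\tfrac{1}{\alpha_t}-1}\Bigr)\bigl(\tilde{\bm{\epsilon}}_\theta(\bar{\bm{z}}_t,t,C,\varnothing_t) - \bm{\epsilon}_\theta(\bar{\bm{z}}_t,t,C)\bigr).
\]
Expanding the CFG definition~\eqref{eq:cfg} rewrites the second factor as $(1-w)\bigl(\bm{\epsilon}_\theta(\bar{\bm{z}}_t,t,\varnothing_t) - \bm{\epsilon}_\theta(\bar{\bm{z}}_t,t,C)\bigr)$. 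The scalar prefactor is nonzero because distinct diffusion steps give $\alpha_{t-1}\neq\alpha_t$, and $(1-w)\neq 0$ by the standing hypothesis $w\neq 1$. Hence the difference vanishes if and only if $\bm{\epsilon}_\theta(\bar{\bm{z}}_t,t,\varnothing_t) = \bm{\epsilon}_\theta(\bar{\bm{z}}_t,t,C)$, which is the claimed equivalence.

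The main obstacle I expect is the step that equates the noise predictions at the adjacent diffusion steps $t-1$ and $t$. This is not merely an algebraic cancellation but requires invoking the alignment result of Proposition~\ref{prop:ddim_align}, together with the idealization that the learned model exactly recovers $\bm{d}_s$; once this identification is in hand, the remainder of the argument is a two-line linear manipulation plus one expansion of the CFG formula.
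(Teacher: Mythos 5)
Your proposal is correct and follows essentially the same route as the paper's proof: express the difference $\bm{z}^*_{t-1}-\bar{\bm{z}}_{t-1}$ as a nonzero scalar times $\tilde{\bm{\epsilon}}_\theta(\bar{\bm{z}}_t,t,C,\varnothing_t)-\bm{\epsilon}_\theta(\bm{z}^*_{t-1},t-1,C)$, invoke correct noise prediction together with Proposition~\ref{prop:ddim_align} to identify the adjacent-step predictions, and expand CFG to extract the factor $(1-w)$. Your explicit remark that the scalar prefactor is nonzero because $\alpha_{t-1}\neq\alpha_t$ is a small point the paper leaves implicit, but the argument is otherwise the same.
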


\begin{proof}
The difference between $\bm{z}_{t-1}(\bar{\bm{z}}_t,t,C,\varnothing_t)$ 
and $\bm{z}^*_{t-1}$ is expressed as 
\begin{align}
    \label{eq:mse_expansion}
    &\bm{z}_{t-1}(\bar{\bm{z}}_t,t,C,\varnothing_t)-\bm{z}^*_{t-1}
    \nonumber\\
    &= \sqrt{\frac{\alpha_{t-1}}{\alpha_{t}}}\bar{\bm{z}}_{t}
    +\sqrt{\alpha_{t-1}}\left(\sqrt{\frac{1}{\alpha_{t-1}}-1}-\sqrt{\frac{1}{\alpha_{t}}-1}\right)
    \nonumber\\
    &\hphantom{=}
    \times
    \tilde{\bm{\epsilon}}_\theta(\bar{\bm{z}}_{t},t,C,\varnothing_t) 
     - \bm{z}^*_{t-1}
    \nonumber \\
    &= \sqrt{\frac{\alpha_{t-1}}{\alpha_{t}}}\bm{z}^*_{t}
    +\sqrt{\alpha_{t-1}}\left(\sqrt{\frac{1}{\alpha_{t-1}}-1}-\sqrt{\frac{1}{\alpha_{t}}-1}\right)
    \nonumber\\
    &\hphantom{=}
    \times
    \tilde{\bm{\epsilon}}_\theta(\bar{\bm{z}}_{t},t,C,\varnothing_t) 
     - \bm{z}^*_{t-1}
    \nonumber \\
    &=\sqrt{\alpha_{t-1}} \left(\sqrt{\frac{1}{\alpha_{t-1}}-1}-\sqrt{\frac{1}{\alpha_{t}}-1}\right)
    \nonumber\\
    &\hphantom{=}\times
    \left(
    \tilde{\bm{\epsilon}}_\theta(\bar{\bm{z}}_{t},t,C,\varnothing_t)
    -\bm{\epsilon}_\theta(\bm{z}^*_{t-1},t-1,C)\right).
\end{align}
In the second line of the above equation we used the assumption $\bm{z}^*_{t}=\bar{\bm{z}}_{t}$, and in the third line 
we substituted~\eqref{eq:z_star} into $\bm{z}_{t}^*$ above. 

As described above, the model $\bm{\epsilon}_\theta(\bm{z}_t,t,C)$ 
attempts to estimate noise $\bm{d}_t$ from $\bm{z}_t$, 
and the assumption that the model correctly predicts the velocity, 
together with the discussion at the beginning of this section,
implies that 
$\bm{\epsilon}_\theta(\bm{z}^*_t,t,C)=\bm{\epsilon}_\theta(\bm{z}^*_{t-1},t-1,C)$ should hold.
One therefore has 
\begin{align}
    &\bm{\epsilon}_\theta(\bm{z}^*_{t-1},t-1,C)
    -\tilde{\bm{\epsilon}}_\theta(\bar{\bm{z}}_{t},t,C,\varnothing_t)
    \nonumber\\
    &= \bm{\epsilon}_\theta(\bm{z}^*_{t},t,C)
    -\tilde{\bm{\epsilon}}_\theta(\bar{\bm{z}}_{t},t,C,\varnothing_t)
    \nonumber \\
    &= \bm{\epsilon}_\theta(\bar{\bm{z}}_{t},t,C)
    -\tilde{\bm{\epsilon}}_\theta(\bar{\bm{z}}_{t},t,C,\varnothing_t)
    \nonumber \\
    &= (1-w)\left(\bm{\epsilon}_\theta(\bar{\bm{z}}_{t},t,C)
    -\bm{\epsilon}_\theta(\bar{\bm{z}}_{t},t,\varnothing_t)\right).
\end{align}
As we have assumed $w\not=1$, $\bm{z}^*_{t-1}-\bm{z}_{t-1}(\bar{\bm{z}}_t,t,C,\varnothing_t)$ is proportional to $\bm{\epsilon}_\theta(\bar{\bm{z}}_{t},t,C)-\bm{\epsilon}_\theta(\bar{\bm{z}}_{t},t,\varnothing_t)$, and it is made equal to zero 
if and only if $\bm{\epsilon}_\theta(\bar{\bm{z}}_{t},t,C)$ and $\bm{\epsilon}_\theta(\bar{\bm{z}}_{t},t,\varnothing_t)$ are equal.
\end{proof}
Since we initialize $\bar{\bm{z}}_T=\bm{z}^*_T$ at diffusion step $T$, 
recursive application of Proposition~\ref{prop:npi} shows, 
under the ideal situation that the model has learned perfectly, 
that one will have $\bar{\bm{z}}_t=\bm{z}_t^*$ for all $t$ 
via letting $\varnothing_t=C$. 
In other words, one can regard that null-text inversion optimizes 
the unconditional prediction to approach the conditional prediction 
at each diffusion step. 

Under practical situations, one can no longer expect 
the exact equality $\bm{\epsilon}_\theta(\bm{z}^*_t,t,C)=\bm{\epsilon}_\theta(\bm{z}^*_{t-1},t-1,C)$ to hold. 
One can still expect, however, that the above equality 
approximately holds: One typically takes small timesteps 
so that $\alpha_{t-1}\approx\alpha_t$ 
and $\bm{z}^*_{t-1}\approx\bm{z}^*_t$, 
so that the argument given after Proposition~\ref{prop:velfield} 
assures that the above equality holds approximately. 

\subsection{Empirical evaluations}
\label{sec:empeval}

\begin{figure*}[tb]
  \centering
  \includegraphics[width=0.9\linewidth]{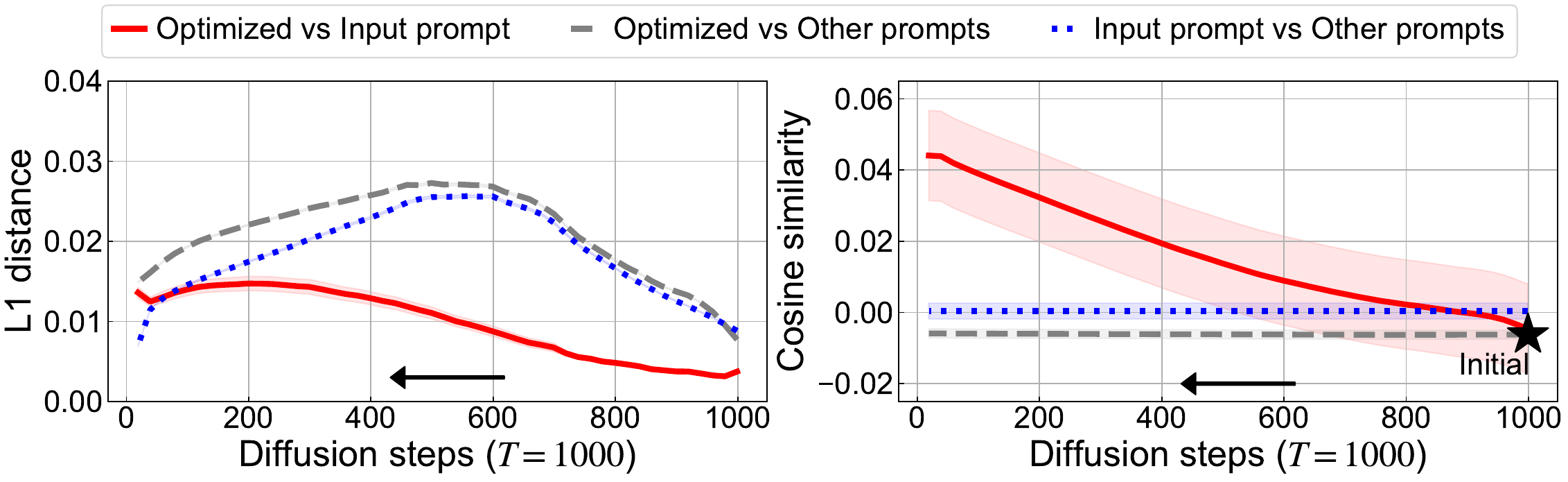}
  \caption{\textbf{Similarity between the optimized null-text and the input prompt.} (Left) The mean $L_1$ distance between predicted velocities using the optimized null-text embedding and the input prompt. (Right) The mean similarity between the optimized null-text embedding and the input prompt. The \textcolor{red}{solid line} shows the similarity between the optimized null-text and the input prompt. The \textcolor{gray}{dashed line} shows the similarity between the optimized null-text and the other prompts. The \textcolor{blue}{dotted line} shows the similarity between the input prompt and the other prompts. \textbf{Initial} represents the starting point of optimization, and optimization was performed in the order indicated by the direction of the arrow in 50 sampling steps. Shaded regions indicate 95\% confidence intervals.} \label{fig:step-vs-sim}
\end{figure*}

The assumption of perfect learning of the model adopted in 
Proposition~\ref{prop:npi} in the previous section is 
certainly too strong to be applied to practical situations. 
We have already discussed the issue of conditioning on $\bm{z}_0$ 
in the previous section. 
Another reason is that it is almost always the case that the model 
learns only approximately. 
Accordingly, what one can expect in practice would be 
that $\bar{\bm{z}}_t=\bm{z}_t^*$ holds only approximately, 
which would then make the validity of the optimality 
of $\varnothing_t=C$ in null-text inversion rather questionable. 
In this section, we investigate empirically how good 
the prompt-text embedding $C$ is compared with the optimized 
null-text embedding $\varnothing_t$, in terms of the velocity prediction 
by the model, as well as their representation in the embedding space. 
In the experiments in this section, 
we used the same 100 image-prompt pairs 
from the COCO dataset as those used in the experiments in the main text. 

We first investigated how close 
the velocity prediction $\bm{\epsilon}_\theta(\bm{z}_t,t,\varnothing_t)$ 
using the optimized null-text embedding $\varnothing_t$ 
and the prediction $\bm{\epsilon}_\theta(\bm{z}_t,t,C)$ 
using the prompt embedding $C$ are. 
More specifically, we performed null-text inversion, 
starting from $\bm{z}_T^*$ obtained via DDIM inversion  
using the embedding $C$, and with the resulting sequences 
$(\bar{\bm{z}}_t)_{t\in\{1,\ldots,T\}}$ and 
$(\varnothing_t)_{t\in\{1,\ldots,T\}}$
we evaluated the $L_1$ distance between 
$\bm{\epsilon}_\theta(\bar{\bm{z}}_t,t,\varnothing_t)$ 
and $\bm{\epsilon}_\theta(\bar{\bm{z}}_t,t,C)$. 
For comparison, we also calculated the $L_1$ distance between 
$\bm{\epsilon}_\theta(\bar{\bm{z}}_{t},t,\varnothing_t)$ and 
the velocity prediction $\bm{\epsilon}_\theta(\bar{\bm{\bm{z}}}_{t},t,C')$ 
obtained using the embeddings $C'$ of the prompts associated with 
images other than the target image, as well as the $L_1$ distance between 
$\bm{\epsilon}_\theta(\bar{\bm{z}}_{t},t,C)$ and 
$\bm{\epsilon}_\theta(\bar{\bm{\bm{z}}}_{t},t,C')$. 

Figure~\ref{fig:step-vs-sim} left shows the mean $L_1$ distance of the predicted velocities.
The predicted velocities using the optimized embeddings $(\varnothing_t)_{t\in\{1,\ldots,T\}}$ were closer to those using $C$ than those using $C'$, 
with a smaller distance than the distance between the predicted velocity using $C$ and that using $C'$.
One observes that the distance between the velocity predictions using $\varnothing_t$ and $C$ became larger as $t$ became smaller, 
which would be ascribed to the accumulation of optimization errors.
One can also notice that the distance between the velocity predictions using $(\varnothing_t)_{t\in\{1,\ldots,T\}}$ and $C$ were larger than that between those using $C$ and $C'$ near $t=0$. 
Velocity predictions near $t=0$, however, would have almost no impact on generated samples since they are added at very small scales.
The results suggest that the predicted velocity $\bm{\epsilon}_\theta(\bar{\bm{z}}_t,t,\varnothing_t)$ using the optimized embedding $\varnothing_t$ in null-text inversion 
can be well approximated by the velocity prediction $\bm{\epsilon}_\theta(\bar{\bm{z}}_t,t,C)$ using the embedding $C$ 
of the input prompt in~\eqref{eq:approxnoise}.

We next calculated the cosine similarity in the 768-dimensional embedding space between the embeddings $C$ for 100 prompts and optimized embeddings $(\varnothing_t)_{t\in\{1,\ldots,T\}}$ for each image. For each embedding sequence we took its average along the length of the sequence, and we centered the resulting average 768-dimensional prompt embeddings by subtracting the mean of 25,014 prompt embeddings, which are all the prompts included in the COCO validation dataset, and took a mean of embeddings over all tokens included in each prompt as the prompt embedding.
Figure~\ref{fig:step-vs-sim} right shows the mean cosine similarity.
As $t$ became smaller, the similarity between the optimized null-text embedding $(\varnothing_t)$ and the embedding $C$ of the given prompt became positive,
whereas the similarity between $(\varnothing_t)$ 
and embeddings $C'$ of the prompts for images other than the target image,
as well as that between $C$ and $C'$, 
remained around zero. 
(We postulate that the small negative values of the similarity 
between $C$ and $C'$ throughout the entire range of $t$ 
are due to the bias induced from the centering.) 
This suggests that, although the implicit ``meaning'' represented by the optimized null-text embedding was almost orthogonal to the ``meanings'' of those of randomly-chosen prompts, 
it was closer to the ``meaning'' represented by the input prompt embedding 
$C$ in the region distant from $t=T$, 
as can be observed by the larger values of similarity between the optimized null-text embedding and the embedding of the input prompt (Optimized vs Input prompt).
In the region distant from $t=T$, except the region near $t=0$, the model is thought to generate detailed information about the image, which should be crucial in obtaining 
a high-quality reconstruction, so that the higher values of similarity 
in this region would suggest that embeddings that would be good 
in the sense of yielding a good reconstruction are closer to 
the embedding $C$ of the target prompt. 
In the large-$t$ region, on the other hand,
the optimized null-text embedding $(\varnothing_t)$ had small similarity
with the embedding $C$ of the given prompt,
which can be ascribed to the fact that the null-text optimization
is initialized with the same null-text embedding $\varnothing$,
and is performed from $t=T$ down to $t=1$.
Note that, in the large-$t$ region, the similarity values were around zero because early stopping in optimizing $\varnothing_t$ was effective and optimization barely progressed.

From these results, we can say that the optimized embedding $\varnothing_t$ becomes semantically similar to the input prompt embedding $C$ as the optimization progresses.
Therefore, it has been confirmed that our inversion method approximates null-text inversion.

\section{Implementation details}
\label{sec:implement}
In our experiments, for the null-text inversion, 
we used the same settings at 50 sampling steps as those in the implementation 
available on the GitHub page of \cite{NullTextInv}.
Optimization was performed with the Adam optimizer, 
and the learning rate was set to reach $5\times10^{-3}$ at the last sampling step, changing linearly by the factor of $10^{-4}$ with the number of sampling steps.
We further employed early stopping, and 
the threshold for early stopping was increased linearly 
in the number of sampling steps from $10^{-5}$ by the factor of $2\times10^{-5}$. 
We observed that when scheduling the learning rate and threshold with a function of diffusion steps, the reconstruction quality was getting worse.
See our code included in SM for more detailed implementation settings of our experiments.

\section{Additional experimental results}
\subsection{Comparison of reconstructed images}
\label{sec:recon}
Figure~\ref{fig:fix_time} shows a comparison of LPIPS between null-text inversion and our method when the computation time was limited to less than 30 seconds.
The number of sampling steps in null-text inversion was 2, 5, and 10.
LPIPS of null-text inversion below 10 sampling steps was degraded, and our method outperformed it.
Under the constraint of allowing feasible processing times, the reconstruction quality of our method was better than that of null-text inversion.

Figure~\ref{fig:add_rec} shows additional images reconstructed by the three methods compared.
All the results show that DDIM inversion produced reconstructions that were not similar to the input images, while null-text inversion almost perfectly reconstructed the input images, and that our method also yielded results which were close to the reconstructions by null-text inversion.

\begin{figure}
    \centering
    \includegraphics[width=0.9\linewidth]{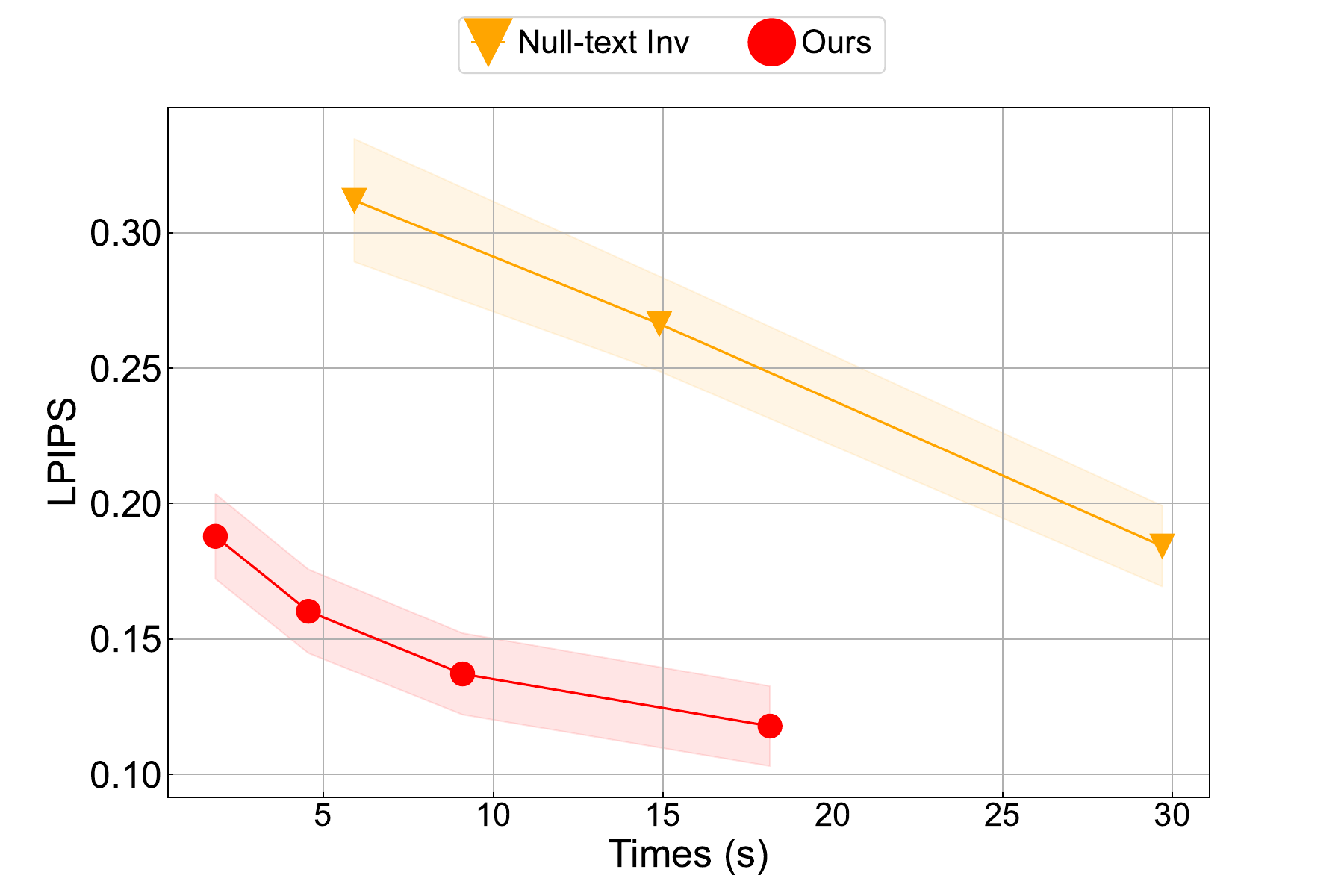}
    \caption{Comparison of LPIPS when calculation time was limited to less than 30 seconds.}
    \label{fig:fix_time}
\end{figure}

\begin{figure*}
    \centering
    \includegraphics[width=0.6\linewidth]{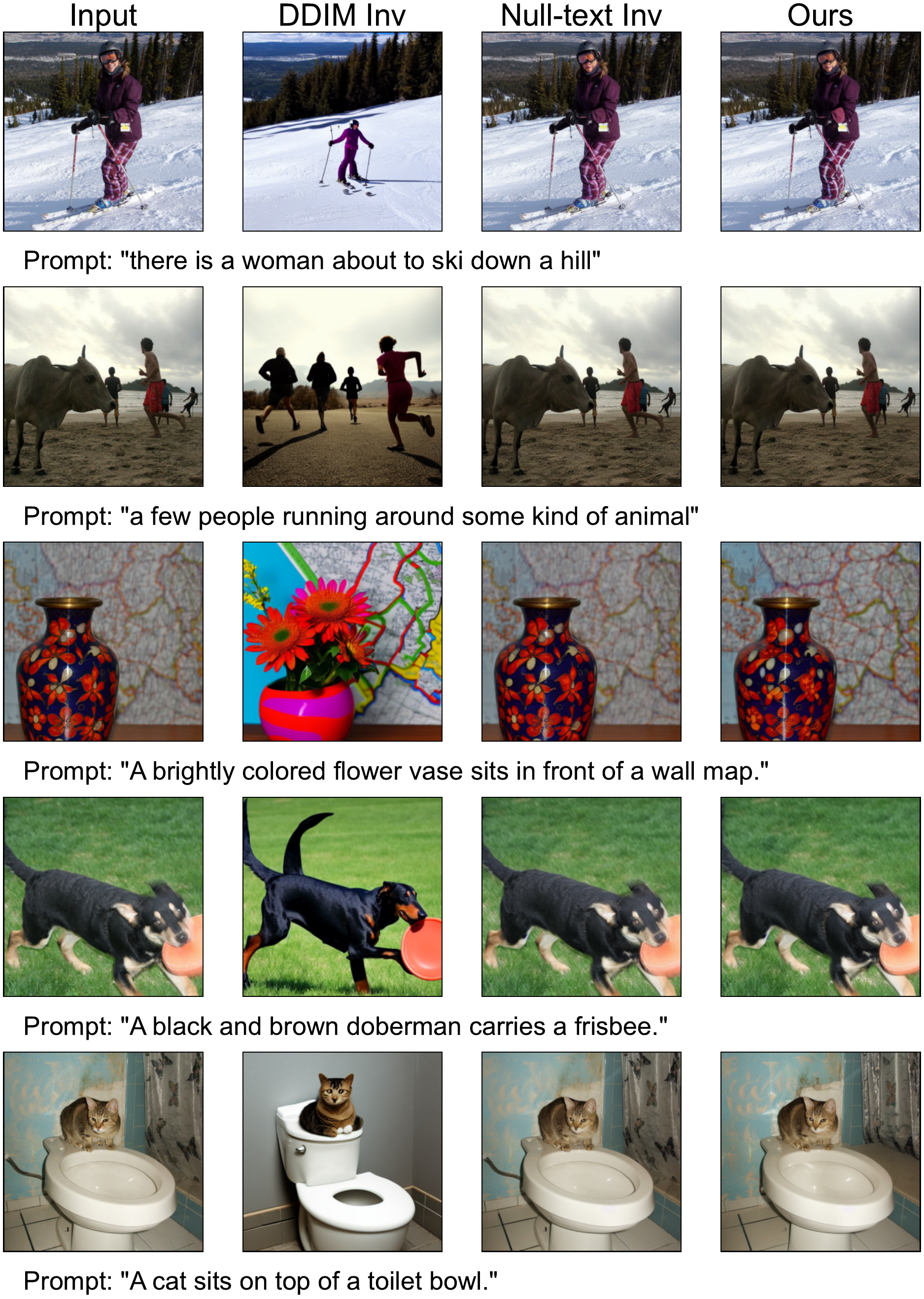}
    \caption{Additional results of reconstructed images by the three methods.}
    \label{fig:add_rec}
\end{figure*}

\subsection{Comparison of edited images using prompt-of-prompt}
\label{sec:editp2p}
Figures~\ref{fig:add_ptp} and \ref{fig:add_ptp2} show additional images edited by prompt-to-prompt.
As can be seen, DDIM inversion failed to perform editing while maintaining the details of the original images.
On the other hand, null-text inversion and the proposed method were both capable of editing while maintaining details of the original images, including object replacement, style changes, size changes, and pose changes.

\begin{figure*}
    \centering
    \includegraphics[width=0.6\linewidth]{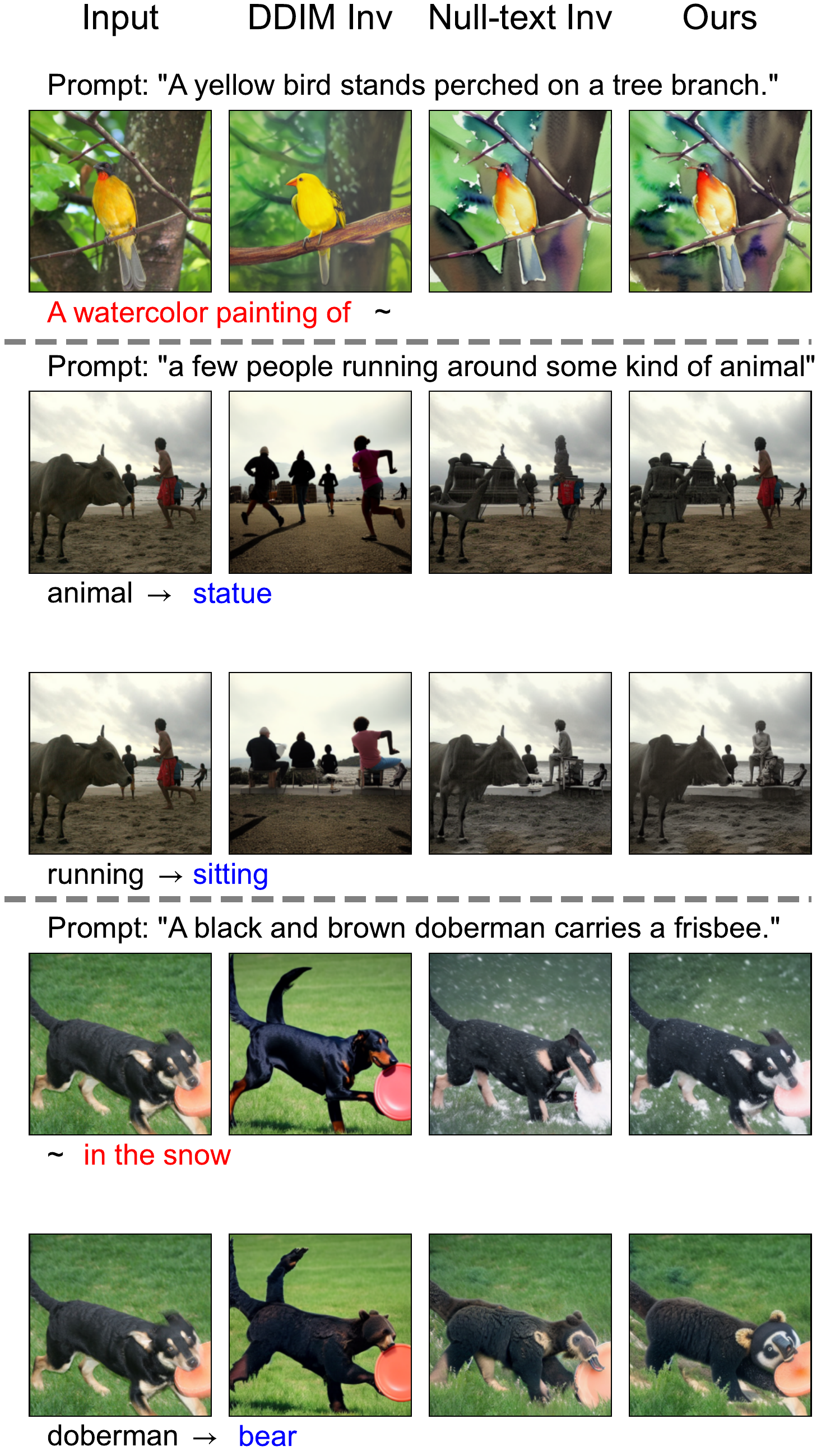}
    \caption{Additional results of edited images by prompt-to-prompt combined with the three methods.}
    \label{fig:add_ptp}
\end{figure*}

\begin{figure*}
    \centering
    \includegraphics[width=0.7\linewidth]{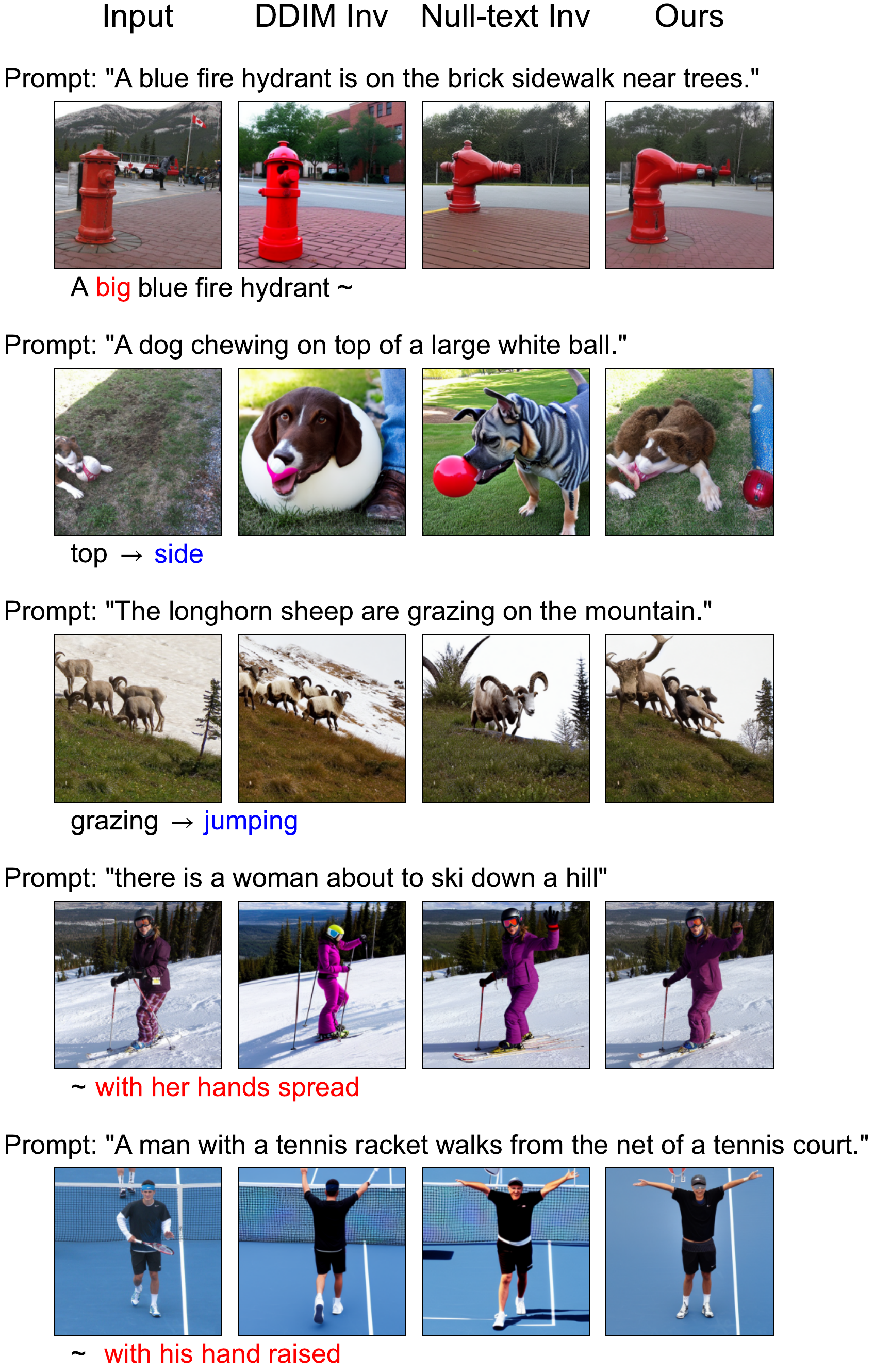}
    \caption{Additional results of edited images by prompt-to-prompt combined with the three methods.}
    \label{fig:add_ptp2}
\end{figure*}

\subsection{Comparison of edited images using other editing methods}
\label{sec:editsded}
We demonstrate the advantage of the proposed method 
that it can be combined with various editing methods. 
For this purpose, we performed editing experiments 
by combining the proposal with other editing methods, SDEdit~\cite{meng2022sdedit} and Plug-and-Play~\cite{PlugandPlay}.
In SDEdit, a certain ratio $t_0$ is used as a hyperparameter to add noise to the sample $\bm{z}_0$, and the latent variable $\bm{z}_t$ at the diffusion step $t = t_0 \cdot T$ is obtained, which is then reconstructed by tracing the inverse diffusion process.
For image editing, $\bm{z}_0$ is obtained from the original image and an edited prompt is used during the inverse diffusion process calculation. 
We set the noisy sample $\bm{z}_t$ calculated by DDIM inversion for null-text inversion and our negative-prompt inversion since they assume starting the sampling from $\bm{z}_T$ calculated by DDIM inversion.
In Plug-and-Play, the null-text is used as a prompt for DDIM inversion.
To combine the proposed method with it, we employed a prompt for the original image instead of the null-text for DDIM inversion.

Figure~\ref{fig:add_sde} shows images edited by SDEdit.
As can be observed, SDEdit could not reconstruct the input images, while negative-prompt inversion and the proposed method were able to reconstruct details of the input images and appropriately edit them as specified by the prompts.
Next, Figure~\ref{fig:add_plg} shows images edited by Plug-and-Play.
Although the results of the proposed method were not generally better, the first, second, and fourth rows show better reconstruction quality and editing results in combination with the proposed method than the original method.

\begin{figure*}
    \centering
    \includegraphics[width=0.6\linewidth]{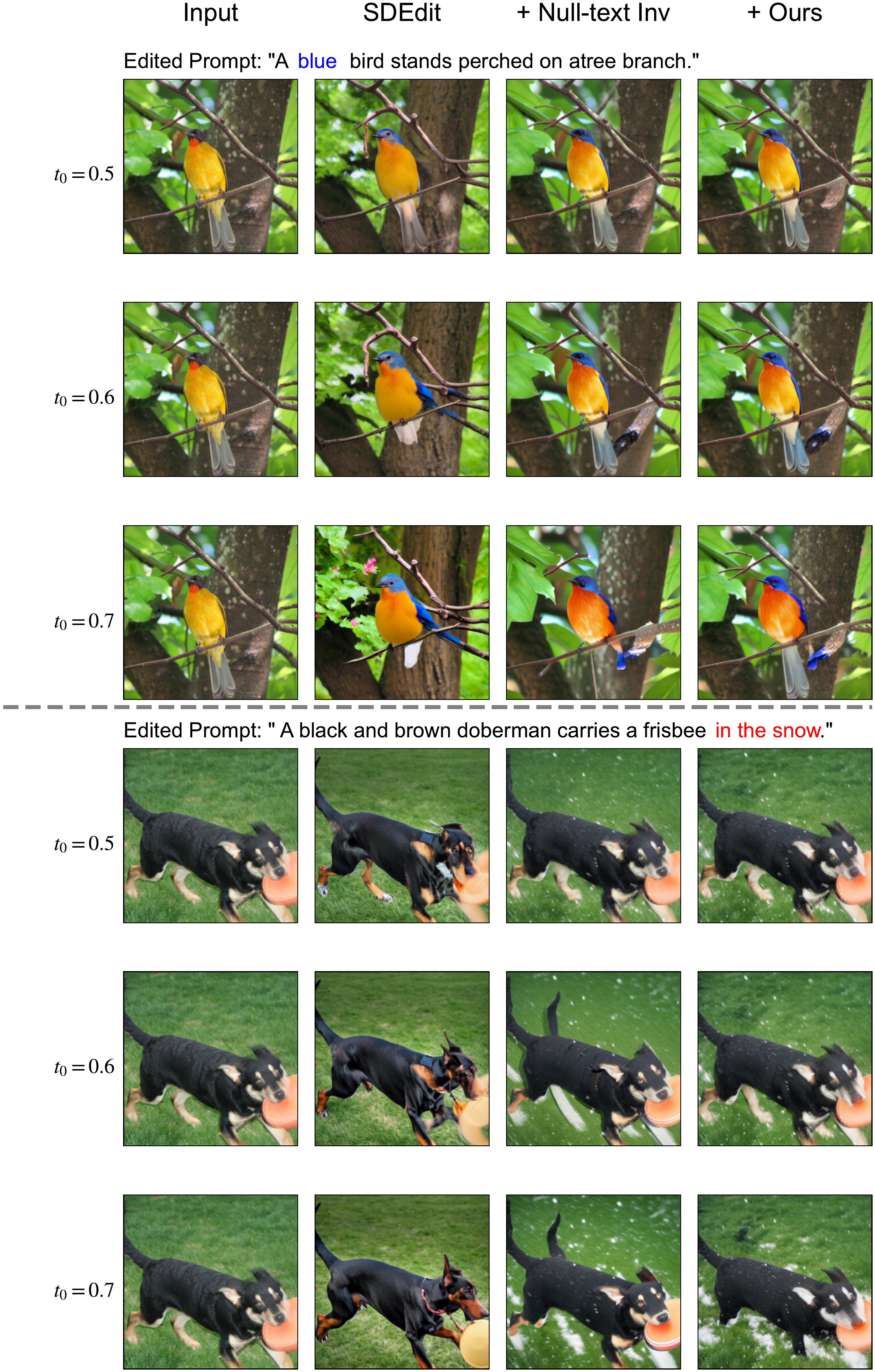}
    \caption{Additional results of edited images by SDEdit combined with null-text inversion or the proposed method.}
    \label{fig:add_sde}
\end{figure*}
\begin{figure*}
    \centering
    \includegraphics[width=0.6\linewidth]{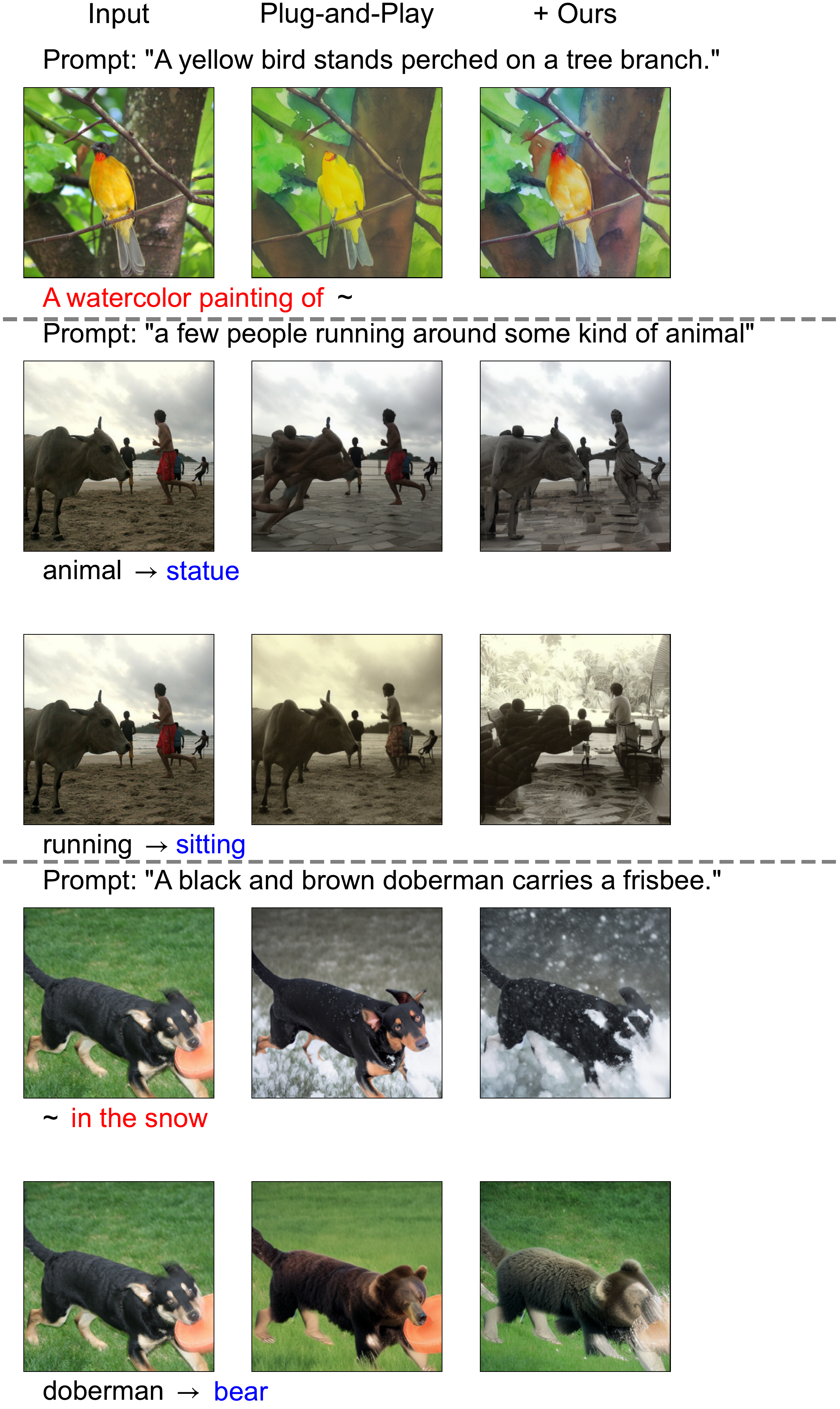}
    \caption{Additional results of edited images by Plug-and-Play combined with the proposed method.}
    \label{fig:add_plg}
\end{figure*}

\end{document}